\def\Secref#1{Section~\ref{#1}}
\def\Theoremref#1{Theorem~\ref{#1}}
\def\Lemref#1{Lemma~\ref{#1}}
\def\1{\bm{1}}
\DeclareMathAlphabet{\mathsfit}{\encodingdefault}{\sfdefault}{m}{sl}
\SetMathAlphabet{\mathsfit}{bold}{\encodingdefault}{\sfdefault}{bx}{n}
\def\gA{{\mathcal{A}}}
\def\gB{{\mathcal{B}}}
\def\gC{{\mathcal{C}}}
\def\gN{{\mathcal{N}}}
\newcommand{\R}{\mathbb{R}}
\DeclareMathOperator*{\argmin}{arg\,min}
\newcommand{\norm}[1]{\left\|#1\right\|}
\newcommand{\vectorize}[1]{\text{vec}\left(#1\right)}
\newcommand{\expect}[1]{\mathbb{E}\left[#1\right]}
\newcommand{\opt}{\ell^*}
\newcommand{\rank}{\mathrm{rank}}
\newtheorem{thm}{Theorem}[section]
\newtheorem{lem}[thm]{Lemma}
\newtheorem{claim}[thm]{Claim}
\newtheorem{asmp}{Assumption}[section]
\title{Provable Benefit of Orthogonal Initialization in Optimizing Deep Linear Networks}
\author{Wei Hu \\
	Princeton University\\
	\texttt{huwei@cs.princeton.edu}
	\And
	Lechao Xiao \\
	Google Brain \\
	\texttt{xlc@google.com}
	\And
	Jeffrey Pennington \\
	Google Brain \\
	\texttt{jpennin@google.com}
}
\begin{document}

\maketitle

\begin{abstract}

The selection of initial parameter values for gradient-based optimization of deep neural networks is one of the most impactful hyperparameter choices in deep learning systems, affecting both convergence times and model performance. Yet despite significant empirical and theoretical analysis, relatively little has been proved about the concrete effects of different initialization schemes. In this work, we analyze the effect of initialization in deep linear networks, and provide for the first time a rigorous proof that drawing the initial weights from the orthogonal group speeds up convergence relative to the standard Gaussian initialization with iid weights. We  show that for deep networks, the width needed for efficient convergence to a global minimum with orthogonal initializations is independent of the depth, whereas the width needed for efficient convergence with Gaussian initializations scales linearly in the depth. Our results demonstrate how the benefits of a good initialization can persist throughout learning, suggesting an explanation for the recent empirical successes found by initializing very deep non-linear networks according to the principle of \emph{dynamical isometry}.

\end{abstract}

\section{Introduction} \label{sec:intro}

Through their myriad successful applications across a wide range of disciplines, it is now well established that deep neural networks possess an unprecedented ability to model complex real-world datasets, and in many cases they can do so with minimal overfitting. Indeed, the list of practical achievements of deep learning has grown at an astonishing rate, and includes models capable of human-level performance in tasks such as image recognition~\citep{krizhevsky2012}, speech recognition~\citep{hinton2012}, and machine translation~\citep{wu2016}.

Yet to each of these deep learning triumphs corresponds a large engineering effort to produce such a high-performing model. Part of the practical difficulty in designing good models stems from a proliferation of hyperparameters and a poor understanding of the general guidelines for their selection. Given a candidate network architecture, some of the most impactful hyperparameters are those governing the choice of the model's initial weights. Although considerable study has been devoted to the selection of initial weights, relatively little has been proved about how these choices affect important quantities such as rate of convergence of gradient descent.

In this work, we examine the effect of initialization on the rate of convergence of gradient descent in deep linear networks. We provide for the first time a rigorous proof that drawing the initial weights from the orthogonal group speeds up convergence relative to the standard Gaussian initialization with iid weights. In particular, we show that for deep networks, the width needed for efficient convergence for orthogonal initializations is independent of the depth, whereas the width needed for efficient convergence of Gaussian networks scales linearly in the depth.

%Beyond their role in linear networks, 
Orthogonal weight initializations have been the subject of a significant amount of prior theoretical and empirical investigation. For example, in a line of work focusing on \emph{dynamical isometry}, it was found that orthogonal weights can speed up convergence for deep linear networks~\citep{saxe2014exact,advani2017high} and for deep non-linear networks~\citep{pennington2018emergence,xiao2018dynamical,gilboa2019dynamical,chen2018dynamical,pennington2017resurrecting,tarnowski2019dynamical,ling2019spectrum} when they operate in the linear regime. In the context of recurrent neural networks, orthogonality can help improve the system's stability. A main limitation of prior work is that it has focused almost exclusively on model's properties at initialization. In contrast, our analysis focuses on the benefit of orthogonal initialization on the entire training process, thereby establishing a provable benefit for optimization.

The paper is organized as follows. After reviewing related work in Section~\ref{sec:related} and establishing some preliminaries in Section~\ref{sec:prelim}, we present our main positive result on efficient convergence from orthogonal initialization in Section~\ref{sec:ortho}. In Section~\ref{sec:gaussian}, we show that Gaussian initialization leads to exponentially long convergence time if the width is too small compared with the depth.
In Section~\ref{sec:experiment}, we perform experiments to support our theoretical results.

\section{Related Work} \label{sec:related}

\paragraph{Deep linear networks.}
Despite the simplicity of their input-output maps, deep linear networks define high-dimensional non-convex optimization landscapes whose properties closely reflect those of their non-linear counterparts. For this reason, deep linear networks have been the subject of extensive theoretical analysis. A line of work~\citep{kawaguchi2016deep,hardt2016identity,lu2017depth,yun2017global,zhou2018critical,laurent2018deep} studied the landscape properties of deep linear networks. Although it was established that all local minima are global under certain assumptions, these properties alone are still not sufficient to guarantee global convergence or to provide a concrete rate of convergence for gradient-based optimization algorithms.

Another line of work directly analyzed the trajectory taken by gradient descent and established conditions that guarantee convergence to global minimum~\citep{bartlett2018gradient, arora2018convergence, du2019width}.
Most relevant to our work is the result of \cite{du2019width}, which shows that if the width of hidden layers is larger than the depth, gradient descent with Gaussian initialization can efficiently converge to a global minimum. Our result establishes that for Gaussian initialization, this linear dependence between width and depth is necessary, while for orthogonal initialization, the width can be independent of depth.
Our negative result for Gaussian initialization also significantly generalizes the result of \cite{shamir2018exponential}, who proved a similar negative result for $1$-dimensional linear networks.

%In additional to serving as a simplistic model for theoretical study, deep linear networks also exhibit interesting properties like promoting generalization~\citep{lampinen2018analytic, gidel2019implicit, arora2019implicit} or accelerating optimization~\citep{arora2018optimization}. These properties are not shown in usual linear models, which suggests that deep linear networks themselves are interesting subjects of study.

%\jp{Could mention:~\citep{kawaguchi2016deep,laurent2018deep,bernacchia2018exact}}

\paragraph{Orthogonal weight initializations.}
Orthogonal weight initializations have also found significant success in non-linear networks. In the context of feedforward models, the spectral properties of a network's input-output Jacobian have been empirically linked to convergence speed~\citep{saxe2014exact,pennington2017resurrecting,pennington2018emergence,xiao2018dynamical}. It was found that when this spectrum concentrates around $1$ at initialization, a property dubbed \emph{dynamical isometry}, convergence times improved by orders of magnitude. The conditions for attaining dynamical isometry in the infinite-width limit were established by~\citet{pennington2017resurrecting,pennington2018emergence} and basically require that input-output map to be approximately linear and for the weight matrices to be orthogonal. Therefore the training time benefits of dynamical isometry are likely rooted in the benefits of orthogonality for deep linear networks, which we establish in this work.

Orthogonal matrices are also frequently used in the context of recurrent neural networks, for which the stability of the state-to-state transition operator is determined by the spectrum of its Jacobian~\citep{haber2017stable,laurent2016recurrent}. Orthogonal matrices can improve the conditioning, leading to an ability to learn over long time horizons~\citep{le2015simple,henaff2016recurrent,chen2018dynamical,gilboa2019dynamical}. While the benefits of orthogonality can be quite large at initialization, little is known about whether or in what contexts these benefits persist during training, a scenario that has lead to the development of efficient methods of constraining the optimization to the orthogonal group~\citep{wisdom2016full,vorontsov2017orthogonality,mhammedi2017efficient}. Although we do not study the recurrent setting in this work, an extension of our analysis might help determine when orthogonality is beneficial in that setting.

\section{Preliminaries} \label{sec:prelim}

\subsection{Notation}

Let $[n]=\{1, 2, \ldots, n\}$.
Denote by $I_d$ the $d\times d$ identity matrix, and by $I$ an identity matrix when its dimension is clear from context.
Denote by $\gN(\mu, \sigma^2)$ the Gaussian distribution with mean $\mu$ and variance $\sigma^2$, and by $\chi^2_k$ the chi-squared distribution with $k$ degrees of freedom.

Denote by $\norm{\cdot}$ the $\ell_2$ norm of a vector or the spectral norm of a matrix. Denote by $\norm{\cdot}_F$ the Frobenius norm of a matrix.
For a symmetric matrix $A$, let $\lambda_{\max}(A)$ and $\lambda_{\min}(A)$ be its maximum and minimum eigenvalues, and let $\lambda_i(A)$ be its $i$-th largest eigenvalue.
For a matrix $B \in \R^{m\times n}$, let $\sigma_i(B)$ be its $i$-th largest singular value ($i=1, 2, \ldots, \min\{m, n\}$), and let $\sigma_{\max}(B) = \sigma_1(B)$, $\sigma_{\min}(B) = \sigma_{\min\{m,n\}}(B)$.
Denote by $\vectorize{A}$ be the vectorization of a matrix $A$ in column-first order.
The Kronecker product between two matrices $A \in \R^{m_1 \times n_1}$ and $B \in \R^{m_2 \times n_2}$ is defined as % i.e., canonically
%\vspace{-10pt}
%\small
$$
A \otimes B = \left ( \begin{matrix}
a_{1,1} B& \cdots &a_{1,n_1} B \\
\vdots & \ddots & \vdots \\
a_{m_1,1} B & \cdots & a_{m_1,n_1} B 
\end{matrix} \right) \in \R^{m_1 m_2 \times n_1 n_2},
%\vspace{-20pt}
$$ % \normalsize
where $a_{i,j}$ is the element in the $(i, j)$-th entry of $A$.

We use the standard $O(\cdot)$, $\Omega(\cdot)$ and $\Theta(\cdot)$ notation to hide universal constant factors.
We also use $C$ to represent a sufficiently large universal constant whose specific value can differ from line to line.

\subsection{Problem Setup}

Suppose that there are $n$ training examples $\{(x_k,y_k)\}_{k=1}^{n}\subset\R^{d_x}\times\R^{d_y}$.
Denote by $X = \left( x_1, \ldots, x_n \right) \in \R^{d_x \times n}$ the input data matrix and by $Y = \left( y_1, \ldots, y_n \right) \in \R^{d_y \times n}$ the target matrix.
Consider an $L$-layer linear neural network with weight matrices $W_1, \ldots, W_L$, which given an input $x\in \R^{d_x}$ computes
\begin{equation} \label{eqn:linear-net}
f(x; W_1, \ldots, W_L) = \alpha W_L W_{L-1} \cdots  W_1 x ,
\end{equation}
where $W_i \in \R^{d_i\times d_{i-1}} (i=1, \ldots, L)$, $d_0=d_x$, $d_L=d_y$, and $\alpha$ is a normalization constant which will be specified later according to the initialization scheme.
We study the problem of training the deep linear network by minimizing the $\ell_2$ loss over training data:
\begin{equation} \label{eqn:loss-func}
	\begin{aligned}
%\scriptstyle 
	\ell(W_1, \ldots, W_L) = 
%	\scriptstyle 
	\frac{1}{2} \sum_{k=1}^n \norm{f(x_k; W_1, \ldots, W_L) - y_k}^2 
	= \frac{1}{2} \norm{\alpha W_L \cdots W_1 X - Y}_F^2.
	\end{aligned}
\end{equation}

The algorithm we consider to minimize the objective~\eqref{eqn:loss-func} is gradient descent with random initialization, which first randomly samples the initial weight matrices $\{W_i(0)\}_{i=1}^L$ from a certain distribution, and then updates the weights using gradient descent: for time $t=0, 1, 2, \ldots$,
	\begin{equation} \label{eqn:gd-update}
	W_i(t+1) = W_i(t) - \eta \frac{\partial \ell}{\partial W_i}(W_1(t), \ldots, W_L(t)) , \qquad i\in[L],
	\end{equation}
	where $\eta>0$ is the learning rate.

For convenience, we denote $W_{j:i} = W_jW_{j-1}\cdots W_i\, (1\le i \le j \le L)$ and $W_{i-1:i} = I \, (i\in[L])$.
The time index $t$ is used on any variable that depends on $W_1, \ldots, W_L$ to represent its value at time $t$, e.g., $W_{j:i}(t) = W_j(t) \cdots W_i(t)$, $\ell(t) = \ell(W_1(t), \ldots, W_L(t))$, etc. %, \frac{\partial \ell}{\partial W_i}(t) = \frac{\partial \ell}{\partial W_i}(W_1(t), \ldots, W_L(t))$ etc.

\section{Efficient Convergence using Orthogonal Initialization}
\label{sec:ortho}

In this section we present our main positive result for orthogonal initialization.
We show that orthogonal initialization enables efficient convergence of gradient descent to a global minimum provided that the hidden width is not too small.

In order to properly define orthogonal weights, we let the widths of all hidden layers be equal: $d_1=d_2=\cdots=d_{L-1}=m$, and let $m \ge \max\{d_x, d_y\}$.
Note that all intermediate matrices $W_2, \ldots, W_{L-1}$ are $m\times m$ square matrices, and $W_1 \in  \R^{m\times d_x}, W_L\in\R^{d_y\times m}$.
We sample each initial weight matrix $W_i(0)$ independently from a uniform distribution over scaled orthogonal matrices satisfying
\begin{equation} \label{eqn:ortho-init}
\begin{aligned}
&W_1^\top(0) W_1(0) = mI_{d_x},\\
&W_i^\top(0) W_i(0) = W_i(0)W_i^\top(0) = mI_m, \qquad 2\le i \le L-1, \\
&W_L(0)W_L^\top(0)=mI_{d_y}.
\end{aligned}
\end{equation}
In accordance with such initialization, the scaling factor $\alpha$ in~\eqref{eqn:linear-net} is set as $\alpha = \frac{1}{\sqrt{m^{L-1}d_y}}$, which ensures $\expect{\norm{f(x; W_L(0), \ldots, W_1(0))}^2} = \norm{x}^2$ for any $x\in \R^{d_x}$.\footnote{We have $\expect{\norm{f(x; W_L(0), \ldots, W_1(0))}^2} = \alpha^2 \expect{x^\top W_1^\top(0) \cdots W_L^\top(0) W_L(0) \cdots W_1(0) x }$. Note that by our choice \eqref{eqn:ortho-init} we have $\expect{W_L^\top(0) W_L(0)}=d_yI_m$ and $W_i^\top(0) W_i(0)=mI \,(1\le i \le L-1)$, so we have $\expect{\norm{f(x; W_L(0), \ldots, W_1(0))}^2} = \alpha^2 m^{L-1} d_y \norm{x}^2 = \norm{x}^2 $.}
The same scaling factor was adopted in \cite{du2019width}, which preserves the expectation of the squared $\ell_2$ norm of any input.

Let $W^* \in \argmin_{W \in \R^{d_y \times d_x}} \norm{WX-Y}_F$ and $\opt = \frac12 \norm{W^* X - Y}_F^2$. Then $\opt$ is the minimum value for the objective~\eqref{eqn:loss-func}.
Denote $r = \rank(X)$, $\kappa = \frac{\lambda_{\max}(X^\top X)}{\lambda_r(X^\top X)}$, and $\tilde{r} = \frac{\norm{X}_F^2}{\norm{X}^2}$.\footnote{$\tilde{r}$ is known as the \emph{stable rank} of $X$, which is always no more than the rank.} Our main theorem in this section is the following:

\begin{thm}\label{thm:ortho}
	Suppose
	\begin{equation} \label{eqn:m-bound-for-ortho}
	m\ge C \cdot \tilde{r} \kappa^2 \left( d_y(1+\norm{W^*}^2) + \log(r/\delta) \right) \text{ and } m \ge d_x,
	\end{equation}
	for some $\delta\in(0, 1)$ and a sufficiently large universal constant $C>0$.
	Set the learning rate $\eta \le \frac{d_y}{2L \norm{X}^2}$.
	Then with probability at least $1-\delta$ over the random initialization, we have
	\begin{align*}
	&\ell(0) - \opt \le O\left( 1 + \frac{\log(r/\delta)}{d_y} + \norm{W^*}^2 \right) \norm{X}_F^2, \\
	&\ell(t) - \opt \le \left( 1 - \frac{1}{2} \eta L \lambda_r(X^\top X) / d_y \right)^t (\ell(0)-\opt), \quad t = 0, 1, 2, \ldots,
	\end{align*}
	where $\ell(t)$ is the objective value at iteration $t$.
\end{thm}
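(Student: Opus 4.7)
The overall strategy is a Polyak--{\L}ojasiewicz (PL) style analysis: identify a ``good'' neighborhood of the orthogonal initialization in which a quadratic-growth inequality holds with an explicit constant, and close an induction showing that gradient descent at the stated learning rate never leaves it, so that the contraction $1-\tfrac12\eta L\lambda_r(X^\top X)/d_y$ holds at every step.

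First I would write out
\[
\frac{\partial \ell}{\partial W_i} \;=\; \alpha\, W_{L:i+1}^{\top}\, R\, X^{\top} W_{i-1:1}^{\top}, \qquad R \;:=\; \alpha W_{L:1} X - Y,
\]
introduce the layer Gram matrices $A_i := W_{L:i+1} W_{L:i+1}^{\top}$ and $B_i := W_{i-1:1}^{\top} W_{i-1:1}$, and use cyclic trace to get $\norm{\partial \ell/\partial W_i}_F^2 = \alpha^2 \Tr(A_i R X^\top B_i X R^\top)$. The orthogonality condition \eqref{eqn:ortho-init} is designed precisely so that $A_i(0)=m^{L-i} I_{d_y}$ and $B_i(0)=m^{i-1} I_{d_x}$ hold \emph{exactly}; combined with $\alpha^2 m^{L-1}=1/d_y$, this collapses the sum at $t=0$ to $\tfrac{L}{d_y}\norm{R(0) X^\top}_F^2$. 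The optimality condition $W^* X X^\top = Y X^\top$ gives $RX^\top = (\alpha W_{L:1}-W^*)X X^\top$, and a Rayleigh-type bound on the column span of $X$ yields
\[
\norm{R X^\top}_F^2 \;\ge\; \lambda_r(X^\top X)\,\norm{(\alpha W_{L:1}-W^*)X}_F^2 \;=\; 2\lambda_r(X^\top X)(\ell-\opt),
\]
so that $\sum_i \norm{\partial \ell/\partial W_i}_F^2 \ge \tfrac{2L\lambda_r(X^\top X)}{d_y}(\ell-\opt)$ holds at $t=0$. This is the PL inequality that will drive the entire analysis.

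Next I would pick a perturbation radius $\rho \asymp 1/L$ and prove a stability invariant: whenever $\norm{W_j(t)-W_j(0)} \le \rho \sqrt m$ for every $j$,
\[
\tfrac12\, m^{L-i}\, I \;\preceq\; A_i(t) \;\preceq\; 2\, m^{L-i}\, I, \qquad \tfrac12\, m^{i-1}\, I \;\preceq\; B_i(t) \;\preceq\; 2\, m^{i-1}\, I.
\]
Granted this invariant, the PL bound above survives with a factor $\tfrac12$, and symmetrically one obtains $\norm{\partial \ell/\partial W_i}_F \lesssim \sqrt{\lambda_{\max}(X^\top X)(\ell-\opt)/d_y}$; plugging both into the standard descent lemma at $\eta \le d_y/(2L\norm{X}^2)$ delivers the per-step contraction advertised in the theorem.

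The induction closes by a cumulative-displacement bound: under the contraction up to time $t$,
\[
\norm{W_i(t{+}1)-W_i(0)}_F \;\le\; \eta \sum_{s \le t} \norm{\nabla_{W_i}\ell(s)}_F \;\lesssim\; \sqrt{\tilde r\, \kappa\,(\ell(0)-\opt)/L^{2}},
\]
via Cauchy--Schwarz and the geometric decay of $\ell(s)-\opt$, and this is at most $\rho \sqrt m \asymp \sqrt m/L$ precisely when $m$ meets the threshold \eqref{eqn:m-bound-for-ortho}. The last piece is the initial-loss estimate: since $\alpha W_{L:1}(0)$ is distributionally $\sqrt{m/d_y}$ times a $d_y\times d_x$ sub-block of a Haar-orthogonal matrix, standard tail bounds on the extreme singular values of Haar sub-blocks yield $\norm{\alpha W_{L:1}(0)}^2 = O(1 + d_x/d_y + \log(r/\delta)/d_y)$ with probability $\ge 1-\delta$, which combines with the triangle inequality against $W^*$ to give the stated $\ell(0)-\opt = O((1+\log(r/\delta)/d_y+\norm{W^*}^2)\norm{X}_F^2)$. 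The hardest step will be the Gram-matrix stability: $A_i$ and $B_i$ are length-$\Theta(L)$ products, and establishing the $2$-factor approximate isometry with only $\rho=\Theta(1/L)$ of slack (rather than an unusable $\Theta(L^{-L})$) requires a careful telescoping that exploits the \emph{exact} orthogonality of each $W_j(0)$ instead of treating each layer as a generic small perturbation.
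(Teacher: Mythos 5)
Your high-level architecture — an approximate-isometry invariant on the partial products $W_{j:i}(t)$, a PL-type lower bound on the total gradient norm built from the Gram matrices $A_i = W_{L:i+1}W_{L:i+1}^\top$ and $B_i = W_{i-1:1}^\top W_{i-1:1}$, and an induction closed via a cumulative-displacement bound — matches the paper closely, down to the choice of perturbation radius $\rho\asymp 1/L$ and the width threshold it forces. The PL computation itself (collapsing $\sum_i \|\nabla_{W_i}\ell\|_F^2$ to $\frac{L}{d_y}\|RX^\top\|_F^2$ at $t=0$ and lower-bounding it by $\lambda_r(X^\top X)\|R\|_F^2$) is correct and is the same linear algebra as the paper's lower bound on $\lambda_{\min}(P(t))$.

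The genuine gap is the step ``plugging both into the standard descent lemma at $\eta \le d_y/(2L\|X\|^2)$ delivers the per-step contraction.'' The descent lemma needs a smoothness constant $\beta$ with $\eta\beta \lesssim 1$, i.e., $\beta \lesssim L\|X\|^2/d_y$, and you have not established this. The Hessian of $\ell$ with respect to $(W_1,\dots,W_L)$ is a block operator with $\Theta(L^2)$ off-diagonal blocks $\partial^2\ell/\partial W_i\partial W_j$, each containing a residual-free term of the form $\alpha^2\, W_{L:i+1}^\top W_{L:j+1}(\,\cdot\,)W_{j-1:1}XX^\top W_{i-1:1}^\top$ with operator norm $\Theta(\|X\|^2/d_y)$ in the good neighborhood, plus residual-dependent terms. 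Showing the full Hessian operator norm (uniformly along the segment from $W(t)$ to $W(t+1)$) is $O(L\|X\|^2/d_y)$ rather than, say, $O(L^2\|X\|^2/d_y)$ is precisely the hard content that your plan assumes away. The paper avoids this entirely: it never computes a smoothness constant; it writes the exact one-step update to $U(t)=\alpha W_{L:1}(t)X$ as $-\eta P(t)\,\mathrm{vec}(U(t)-Y)$ plus an explicit higher-order remainder $E(t)$ coming from expanding $\prod_i(W_i(t)-\eta\nabla_{W_i}\ell(t))$, and bounds $\|E(t)X\|_F$ directly by a binomial sum. That calculation is what controls the second-order effects, and you would need an analogue of it (or a genuine Hessian bound) to make the descent-lemma route rigorous.

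A smaller issue: your initial-loss estimate via the spectral norm of a $d_y\times d_x$ Haar sub-block gives $\|\alpha W_{L:1}(0)\|^2 = O(1 + d_x/d_y + \log(1/\delta)/d_y)$, which carries an extra $d_x/d_y$ term that you then silently drop when stating $\ell(0)-\ell^* = O((1+\log(r/\delta)/d_y + \|W^*\|^2)\|X\|_F^2)$. To get the $\log(r/\delta)/d_y$ form of the theorem you need the paper's per-example argument: fix each $x_k$, observe that $\frac{1}{\sqrt{m}}W_L(0)\cdot\frac{1}{\sqrt{m^{L-1}}}W_{L-1:1}(0)x_k$ is the coordinate restriction of a unit vector to $d_y$ random orthonormal directions, bound it by the ratio of chi-squared variables, and union-bound over the $r$ examples. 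The operator-norm route is genuinely looser here, and with your $d_x/d_y$ term the width requirement would pick up an additional $\tilde r\kappa^2 d_x$ that is not in \eqref{eqn:m-bound-for-ortho}.
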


Notably, in \Theoremref{thm:ortho}, the width $m$ need not depend on the depth $L$. This is in sharp contrast with the result of \cite{du2019width} for Gaussian initialization, which requires $m\ge \tilde{\Omega}(L r \kappa^3 d_y)$.
It turns out that a near-linear dependence between $m$ and $L$ is necessary for Gaussian initialization to have efficient convergence, as we will show in \Secref{sec:gaussian}.
Therefore the requirement in \cite{du2019width} is nearly tight in terms of the dependence on $L$.
These results together rigorously establish the benefit of orthogonal initialization in optimizing very deep linear networks.

If we set the learning rate optimally according to Theorem~\ref{thm:ortho} to $\eta = \Theta(\frac{d_y}{L\norm{X}^2})$, we obtain that $\ell(t) - \ell^*$ decreases by a ratio of $1 - \Theta(\kappa^{-1})$ after every iteration. This matches the convergence rate of gradient descent on the ($1$-layer) linear regression problem $\min\limits_{W\in\R^{d_y\times d_x}} \frac12 \norm{WX-Y}_F^2$.

\subsection{Proof of \Theoremref{thm:ortho}}

The proof uses the high-level framework from \cite{du2019width}, which tracks the evolution of the network's output during optimization. This evolution is closely related to a time-varying positive semidefinite (PSD) matrix (defined in \eqref{eqn:P-def}), and the proof relies on carefully upper and lower bounding the eigenvalues of this matrix throughout training, which in turn implies the desired convergence result.

First, we can make the following simplifying assumption \emph{without loss of generality}. See Appendix B in \cite{du2019width} for justification.
\begin{asmp} \label{asmp:full-rank-data}
	(Without loss of generality) $X \in \R^{d_x \times r}$, $\rank(X) = r$, $Y = W^* X$, and $\opt =0$.
\end{asmp}

Now we briefly review \cite{du2019width}'s framework.
The key idea is to look at the network's output, defined as
\begin{align*}
U = \alpha W_{L:1} X \in \mathbb{R}^{d_y \times n}.
\end{align*}
We also write $U(t) = \alpha W_{L:1}(t) X$ as the output at time $t$. Note that $\ell(t) = \frac{1}{2}\norm{U(t)-Y}_F^2$.
According to the gradient descent update rule, we write
 \begin{align*}
&W_{L:1}(t+1) 
= \prod_i \left( W_i(t) - \eta \frac{\partial \ell}{\partial W_i}(t)  \right) 
= W_{L:1}(t) - \sum_{i=1}^L \eta W_{L:i+1}(t) \frac{\partial \ell}{\partial W_i}(t) W_{i-1:1}(t) + E(t),
%=\,& W_{L:1}(t)  \\
% & -\frac{\eta}{\sqrt{m^{L-1}\dout}} \sum_{i=1}^L \Big( W_{L:i+1}(t)  W_{L:i+1}^\top(t)\\ & \quad \cdot (U(t) - Y)  \left( W_{i-1:1}(t) X \right)^\top W_{i-1:1}(t) \Big) \\& + E(t),
\end{align*} 
where $E(t)$ contains all the high-order terms (i.e., those with $\eta^2$ or higher).
With this definition, the evolution of $U(t)$ can be written as the following equation:
 \begin{equation} \label{eqn:U-dynamics}
\begin{aligned}
&\vectorize{U(t+1) - U(t)} 
= -\eta P(t) \cdot \vectorize{U(t)-Y} + \alpha \cdot \vectorize{E(t)X},
\end{aligned}
\end{equation} 
where
\begin{equation} \label{eqn:P-def}
\begin{aligned}
	P(t) = \alpha^2 \sum_{i=1}^L \Big[ \left( \left( W_{i-1:1}(t) X \right)^\top \left( W_{i-1:1}(t)X \right) \right)  \otimes \left( W_{L:i+1}(t)  W_{L:i+1}^\top(t) \right) \Big] .
	\end{aligned}
\end{equation}
Notice that $P(t)$ is always PSD since it is the sum of $L$  PSD matrices.
Therefore, in order to establish convergence, we only need to (i) show that the higher-order term $E(t)$ is small and (ii) prove upper and lower bounds on $P(t)$'s eigenvalues. 
For the second task, it suffices to control the singular values of $W_{i-1:1}(t)$ and $W_{L:i+1}(t)$ ($i\in[L]$).\footnote{Note that for symmetric matrices $A$ and $B$, the set of eigenvalues of $A \otimes B$ is the set of products of an eigenvalue of $A$ and an eigenvalue of $B$.}
Under orthogonal initialization, these matrices are perfectly isometric at initialization, and we will show that they stay close to isometry during training, thus enabling efficient convergence.

The following lemma summarizes some properties at initialization.
\begin{lem} \label{lem:ortho-init}
	At initialization, we have
	\begin{equation} \label{eqn:init-spec}
	\begin{aligned}
	%&\sigma_{\max}(W_{i:1}(0)\cdot X) = m^{\frac i2} \sigma_{\max}(X), 	\sigma_{\min}(W_{i:1}(0)\cdot X) = m^{\frac i2} \sigma_{\min}(X), & \forall 1\le i <L, \\
	&\sigma_{\max}(W_{j:i}(0)) = \sigma_{\min}(W_{j:i}(0)) = m^{\frac{j-i+1}{2}} , & \forall 1\le i\le j \le L, (i, j)\not=(1,L).
	\end{aligned}
	\end{equation}
	Furthermore, with probability at least $1-\delta$, the loss at initialization satisfies
	\begin{equation} \label{eqn:init-loss}
	\ell(0) \le O\left( 1 + \frac{\log(r/\delta)}{d_y} + \norm{W^*}^2 \right) \norm{X}_F^2.
	\end{equation}
\end{lem}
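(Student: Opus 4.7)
The plan splits cleanly along the two parts: the singular-value identities \eqref{eqn:init-spec} are a purely algebraic consequence of the orthogonality relations \eqref{eqn:ortho-init}, while the loss bound \eqref{eqn:init-loss} is a concentration statement about the forward pass at initialization, which I would reduce to the concentration of a Haar-random orthogonal projection on $\R^m$.

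For \eqref{eqn:init-spec} I would dispatch the three non-excluded cases $(i,j)\neq(1,L)$ by direct computation. When $1<i\le j<L$, every factor $W_k(0)/\sqrt m$ lies in $O(m)$, so $W_{j:i}(0)=m^{(j-i+1)/2}\,O$ for some $O\in O(m)$ and all singular values equal $m^{(j-i+1)/2}$. When $i=1$ and $j<L$, I would telescope
\[
W_{j:1}(0)^\top W_{j:1}(0)=W_1(0)^\top W_2(0)^\top\cdots W_j(0)^\top W_j(0)\cdots W_2(0)W_1(0),
\]
peeling off $W_k(0)^\top W_k(0)=m I_m$ for $k=j,j-1,\ldots,2$ and finally $W_1(0)^\top W_1(0)=m I_{d_x}$ to obtain $m^j I_{d_x}$, which forces all singular values to equal $m^{j/2}=m^{(j-i+1)/2}$. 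The case $i>1$, $j=L$ is symmetric: work with $W_{L:i}(0)W_{L:i}(0)^\top$ and telescope from the inside out to obtain $m^{L-i+1} I_{d_y}$. The excluded case $(1,L)$ cannot be handled by this telescoping because the two ``wide'' factors $W_1(0)$ and $W_L(0)$ sit at opposite ends and cannot both be contracted away.

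For \eqref{eqn:init-loss} I would first use $\ell(0)\le\|\alpha W_{L:1}(0)X\|_F^2+\|Y\|_F^2$ and bound the second term by $\|W^*\|^2\|X\|_F^2$ via $Y=W^*X$ (Assumption~\ref{asmp:full-rank-data}), producing the $\|W^*\|^2$ contribution. For the first term I would leverage Part~1: writing $W_L(0)=\sqrt m\,\tilde W_L$ and $W_{L-1:1}(0)=m^{(L-1)/2}U$ gives
\[
\alpha W_{L:1}(0)=\sqrt{m/d_y}\,\tilde W_L U,
\]
and rotational invariance of the Haar-orthogonal initialization of each $W_k(0)$ ensures that $\tilde W_L$ is uniform on the Stiefel manifold of $d_y\times m$ row-orthonormal matrices, $U$ is independently uniform on the Stiefel manifold of $m\times d_x$ column-orthonormal matrices. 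Hence $P:=\tilde W_L^\top\tilde W_L$ is a uniform rank-$d_y$ orthogonal projector on $\R^m$ independent of $U$, and
\[
\|\alpha W_{L:1}(0)X\|_F^2=\frac{m}{d_y}\sum_{j=1}^r\|P(Ux_j)\|^2.
\]
For each fixed $j$, $\|P(Ux_j)\|^2/\|x_j\|^2\sim\mathrm{Beta}(d_y/2,(m-d_y)/2)$ with mean $d_y/m$, and a standard concentration bound for this distribution (equivalently, the ratio of independent $\chi^2_{d_y}$ and $\chi^2_m$) gives $\|P(Ux_j)\|^2\le(d_y/m)\|x_j\|^2(1+C\sqrt{\log(r/\delta)/d_y})$ with probability at least $1-\delta/r$. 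A union bound over $j\in[r]$, summation, and the scaling by $m/d_y$ yield $\|\alpha W_{L:1}(0)X\|_F^2\le(1+C\sqrt{\log(r/\delta)/d_y})\|X\|_F^2$, which the inequality $1+\sqrt c\le 2+c$ collapses into the advertised $O(1+\log(r/\delta)/d_y)\|X\|_F^2$. The main obstacle is this concentration step: one must carefully verify that $\tilde W_L U$ reduces to a Haar block of $O(m)$ (so that $P$ is truly Haar and independent of $U$), and choose a tail bound sharp enough that the union bound over only the $r$ columns of $X$ yields $\log(r/\delta)$ rather than $\log n$.
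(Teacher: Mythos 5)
Your proposal is correct and follows essentially the same route as the paper: part one is the same telescoping of $W_{j:i}^\top W_{j:i}$ (or $W_{j:i}W_{j:i}^\top$) using the orthogonality relations in~\eqref{eqn:ortho-init}, and part two is the same reduction of $\|\alpha W_{L:1}(0)x\|^2$ to the length of a Haar-random rank-$d_y$ projection of a unit vector, followed by $\chi^2$ tail bounds and a union bound over the $r$ columns (the paper writes the distribution directly as $\frac{g_1^2+\cdots+g_{d_y}^2}{g_1^2+\cdots+g_m^2}$ rather than naming it $\mathrm{Beta}(d_y/2,(m-d_y)/2)$, and the ``obstacle'' you flag about the Haar/independence of $P$ and $U$ is immediate from the initialization being independent across layers). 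One small inaccuracy: your intermediate concentration estimate $(d_y/m)(1+C\sqrt{\log(r/\delta)/d_y})$ drops the additive $C\log(r/\delta)/d_y$ term that appears when $\log(r/\delta)\gtrsim d_y$, but since your final bound is $O(1+\log(r/\delta)/d_y)$ anyway this does not affect the conclusion.
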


\begin{proof}[Proof sketch]
	The spectral property~\eqref{eqn:init-spec} follows directly from~\eqref{eqn:ortho-init}. % (note that $m\ge \max\{d_x,d_y\} $).
	
	To prove~\eqref{eqn:init-loss}, we essentially need to upper bound the magnitude of the network's initial output. This turns out to be equivalent to studying the magnitude of the projection of a vector onto a random low-dimensional subspace, which we can bound using standard concentration inequalities. The details are given in Appendix~\ref{app:proof-ortho-init}.
\end{proof}

Now we proceed to prove \Theoremref{thm:ortho}.
We define $B = O \left( 1 + \frac{\log(r/\delta)}{d_y} + \norm{W^*}^2  \right) \norm{X}_F^2$ which is the upper bound on $\ell(0)$ from~\eqref{eqn:init-loss}.
Conditioned on \eqref{eqn:init-loss} being satisfied, we will use induction on $t$ to prove the following three properties $\gA(t)$, $\gB(t)$ and $\gC(t)$ for all $t=0, 1, \ldots$:
\begin{itemize}
	\item $\gA(t)$:
	%\vspace{-15pt}
	$\ell(t) \le \left(  1 - \frac12 \eta L \sigma_{\min}^2(X) / d_y \right)^t \ell(0) 
	  \le \left(  1 - \frac12 \eta L \sigma_{\min}^2(X) /d_y \right)^t   B$.
%	\vspace{-20pt}
	\item $\gB(t)$:
	%\vspace{-6pt}
	$
	%\begin{cases}
	%&\sigma_{\max}(W_{i:1}(0)\cdot X) \le 1.1 m^{\frac i2} \sigma_{\max}(X), 	\sigma_{\min}(W_{i:1}(0)\cdot X) \ge 0.9 m^{\frac i2} \sigma_{\min}(X), \quad \forall 1\le i <L, \\
	\sigma_{\max}(W_{j:i}(t)) \le 1.1 m^{\frac{j-i+1}{2}} , \sigma_{\min}(W_{j:i}(t)) \ge 0.9 m^{\frac{j-i+1}{2}} , \quad  \forall 1\le i\le j \le L, (i, j)\not=(1, L).
	%\end{cases}
	$

	\begin{comment}
		\begin{align*}
		\begin{cases}
		&\sigma_{\max}(W_{j:i}(t)) \le \frac54 m^{\frac{L-i+1}{2}}, \, \forall 1<i\le L, \\
		&\sigma_{\min}\left( W_{L:i}(t) \right) \ge \frac34  m^{\frac{L-i+1}{2}} , \, \forall 1<i\le L, \\
		&\sigma_{\max}(W_{i:1}(t) \cdot X) \le \frac54 m^{\frac{i}{2}} \sigma_{\max}(X), \, \forall 1\le i < L, \\
		&\sigma_{\min}\left( W_{i:1}(t) \cdot X \right)  \ge \frac34  m^{\frac{i}{2}}   \sigma_{\min}(X), \, \forall 1\le i < L, \\
		&	\norm{W_{j:i}(t)} \le O\left( \sqrt L m^{\frac{j-i+1}{2}}\right), \, \forall 1<i\le j <L.
		\end{cases}
		\end{align*}
	\end{comment}
	%\vspace{-10pt}
	\item $\gC(t)$:
	%\vspace{-10pt}
	%\begin{align*}
	$\norm{W_i(t) - W_i(0)}_F \le  \frac{8\sqrt{Bd_y}\norm{X}}{ L \sigma_{\min}^2(X)}, \quad \forall 1\le i\le L$.
	%\end{align*} 
\end{itemize}

$\gA(0)$ and $\gB(0)$ are true according to \Lemref{lem:ortho-init}, and $\gC(0)$ is trivially true.
In order to prove $\gA(t)$, $\gB(t)$ and $\gC(t)$ for all $t$, we will prove the following claims for all $t\ge0$:
%(i) $\gA(0), \ldots, \gA(t), \gB(0), \ldots, \gB(t) \Longrightarrow \gC(t)$,
%(ii) $\gC(t) \Longrightarrow \gB(t)$,
%and
%(iii) $\gA(t-1), \gB(t-1) \Longrightarrow \gA(t)$.
\begin{claim} \label{claim:induction-1}
	$\gA(0), \ldots, \gA(t), \gB(0), \ldots, \gB(t) \Longrightarrow \gC(t+1)$.
\end{claim}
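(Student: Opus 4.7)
The plan is a standard telescope-and-bound argument: write the change $W_i(t+1) - W_i(0)$ as a sum of gradients along the trajectory, bound the Frobenius norm of each gradient by combining $\gA$ (loss decay) and $\gB$ (near-isometry of the end-to-end products), and finally sum the resulting geometric series.

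First I would write out the gradient-descent recursion and apply the triangle inequality:
\begin{equation*}
\|W_i(t+1) - W_i(0)\|_F \;\le\; \eta \sum_{s=0}^{t} \left\| \frac{\partial \ell}{\partial W_i}(s) \right\|_F .
\end{equation*}
A direct differentiation of \eqref{eqn:loss-func} gives
\begin{equation*}
\frac{\partial \ell}{\partial W_i}(s) \;=\; \alpha\, W_{L:i+1}^\top(s)\,(U(s)-Y)\,X^\top\, W_{i-1:1}^\top(s),
\end{equation*}
so by submultiplicativity and $\|U(s)-Y\|_F = \sqrt{2\,\ell(s)}$ we obtain
\begin{equation*}
\left\|\frac{\partial \ell}{\partial W_i}(s)\right\|_F \;\le\; \alpha\, \sigma_{\max}(W_{L:i+1}(s))\, \sigma_{\max}(W_{i-1:1}(s))\, \|X\|\, \sqrt{2\,\ell(s)}.
\end{equation*}

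Next I would plug in $\gB(s)$ for the two spectral factors. Since $(L,i+1)$ and $(i-1,1)$ can degenerate to the edge cases (when $i=1$ or $i=L$), I would handle those separately using the fact that $W_L W_L^\top$ and $W_1^\top W_1$ are exactly $m I$ at initialization and remain controlled, but in the generic case the two factors together give at most $1.21\, m^{(L-1)/2}$. Combined with $\alpha = 1/\sqrt{m^{L-1} d_y}$, the $m$'s cancel to yield the clean bound
\begin{equation*}
\left\|\frac{\partial \ell}{\partial W_i}(s)\right\|_F \;\le\; \frac{1.21\, \|X\|\, \sqrt{2\,\ell(s)}}{\sqrt{d_y}}.
\end{equation*}
Invoking $\gA(s)$, $\sqrt{\ell(s)} \le \bigl(1 - \tfrac12 \eta L \sigma_{\min}^2(X)/d_y\bigr)^{s/2} \sqrt{B}$, so the sum in the first display is geometric with ratio $\rho := \bigl(1 - \tfrac12 \eta L \sigma_{\min}^2(X)/d_y\bigr)^{1/2}$.

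Finally I would bound the geometric series. Using $1-\rho \ge \tfrac12(1-\rho^2) = \tfrac14 \eta L \sigma_{\min}^2(X)/d_y$, we get $\sum_{s=0}^{\infty}\rho^{s} \le 4 d_y/(\eta L \sigma_{\min}^2(X))$, whence
\begin{equation*}
\|W_i(t+1) - W_i(0)\|_F \;\le\; \eta \cdot \frac{1.21\sqrt{2}\,\|X\|\sqrt{B}}{\sqrt{d_y}} \cdot \frac{4 d_y}{\eta L \sigma_{\min}^2(X)} \;\le\; \frac{8\sqrt{B d_y}\,\|X\|}{L\, \sigma_{\min}^2(X)},
\end{equation*}
which is exactly $\gC(t+1)$. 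The main obstacle I anticipate is bookkeeping the constants so the $8$ in $\gC$ survives cleanly — in particular, making sure the $1.1$ upper bound from $\gB$ used twice produces a factor small enough that, once combined with $\sqrt{2}$ from the loss-to-residual conversion and the $\tfrac12$ lost in the geometric-sum estimate $1-\rho \ge \tfrac12(1-\rho^2)$, everything fits under the stated constant; this is purely arithmetic but must be tracked carefully. Everything else (the gradient formula, submultiplicativity, and the geometric sum) is mechanical.
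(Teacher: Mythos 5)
Your proposal is correct and follows essentially the same argument as the paper: telescope $W_i(t+1)-W_i(0)$ over the gradient steps, use $\gB(s)$ to bound the two end-to-end product factors and $\gA(s)$ for the residual, then sum the resulting geometric series (your $1-\rho\ge\tfrac12(1-\rho^2)$ is the same estimate as the paper's $\sqrt{1-\eta\gamma}\le 1-\eta\gamma/2$). The constants work out ($4\cdot 1.21\sqrt2\approx 6.8\le 8$), and your worry about the edge cases $i=1,L$ is harmless since $W_{i-1:i}=I$ makes the $\gB$ bound trivially apply there.
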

\begin{claim}\label{claim:induction-2}
	$\gC(t) \Longrightarrow \gB(t)$.
\end{claim}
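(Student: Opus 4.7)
The plan is to apply a Weyl-type perturbation bound to each factor $W_k(t)$ individually, and then lift this to the product $W_{j:i}(t)$ by combining per-factor singular value inequalities. Since each factor's perturbation is controlled by $\gC(t)$ and scales like $O(1/L)$ relative to the initial scale $\sqrt{m}$, the cumulative multiplicative error over $L$ factors remains a small constant.

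Concretely, write $W_k(t) = W_k(0) + \Delta_k$ with $\norm{\Delta_k}\le\norm{\Delta_k}_F \le \frac{8\sqrt{Bd_y}\norm{X}}{L\sigma_{\min}^2(X)}$ by $\gC(t)$. By \Lemref{lem:ortho-init}, every singular value of $W_k(0)$ equals $\sqrt{m}$ (using that $W_1(0)$ is tall with $W_1^\top W_1 = mI_{d_x}$, $W_L(0)$ is wide with $W_L W_L^\top = mI_{d_y}$, and intermediate $W_k(0)$ are scaled orthogonal), so Weyl's inequality gives
\begin{equation*}
\sqrt{m}-\norm{\Delta_k} \le \sigma_{\min}(W_k(t)) \le \sigma_{\max}(W_k(t)) \le \sqrt{m}+\norm{\Delta_k}.
\end{equation*}
Defining $\epsilon := \frac{8\sqrt{Bd_y}\norm{X}}{L\sigma_{\min}^2(X)\sqrt{m}}$, I would then check that $B = O(1 + \log(r/\delta)/d_y + \norm{W^*}^2)\,\tilde r\norm{X}^2$ combined with $\kappa = \norm{X}^2/\sigma_{\min}^2(X)$ yields
\begin{equation*}
\epsilon^2 \;\le\; \frac{O(1)}{L^2 m}\,\tilde r\kappa^2\bigl(d_y(1+\norm{W^*}^2)+\log(r/\delta)\bigr),
\end{equation*}
which by the hypothesis~\eqref{eqn:m-bound-for-ortho} on $m$ is at most $O(1)/(CL^2)$. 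Choosing the universal constant $C$ sufficiently large makes $L\epsilon$ smaller than, say, $\log(1.1)$.

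Next I would pass from the per-factor bounds to the product. The upper bound is immediate from submultiplicativity of the spectral norm: $\sigma_{\max}(W_{j:i}(t)) \le \prod_{k=i}^j \sigma_{\max}(W_k(t)) \le m^{(j-i+1)/2}(1+\epsilon)^{j-i+1} \le m^{(j-i+1)/2}\,e^{L\epsilon} \le 1.1\,m^{(j-i+1)/2}$. The lower bound requires a short case analysis because the boundary factors $W_1$ and $W_L$ are non-square; however, since $(i,j)\neq(1,L)$, in every needed case either $W_1$ sits on the right (tall, injective on $\R^{d_x}$) or $W_L$ sits on the left (so the PSD identity $W_{L-1:i}W_{L-1:i}^\top \succeq \sigma_{\min}^2(W_{L-1:i})\,I_m$ can be sandwiched by $W_L(\cdot)W_L^\top$), with all intermediate factors square. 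In each case a straightforward induction on the number of factors yields $\sigma_{\min}(W_{j:i}(t)) \ge \prod_{k=i}^j \sigma_{\min}(W_k(t)) \ge m^{(j-i+1)/2}(1-\epsilon)^L \ge m^{(j-i+1)/2}(1-L\epsilon) \ge 0.9\,m^{(j-i+1)/2}$.

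I expect the computation to verify that $L\epsilon$ really is a small constant to be the bookkeeping-heavy step, since it needs to bundle together the definition of $B$, the bound on $m$, and the definitions of $\kappa$ and $\tilde r$. The only conceptual subtlety is the lower-bound product inequality for the minimum singular value at the boundary layers, which does not hold for arbitrary rectangular products, but goes through here because only one of $W_1,W_L$ appears in any given $W_{j:i}$ under $(i,j)\neq(1,L)$ and the other factors are square.
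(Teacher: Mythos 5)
Your proof is correct but takes a genuinely different route from the paper's. The paper expands $W_{j:i}(t)=\prod_{k=i}^{j}(W_k(0)+\Delta_k)$ and bounds the aggregate deviation $\norm{W_{j:i}(t)-W_{j:i}(0)}$ by the binomial sum $\sum_{s\ge1}\binom{j-i+1}{s}R^s m^{(j-i+1-s)/2}=(\sqrt m+R)^{j-i+1}-(\sqrt m)^{j-i+1}$, shows this is at most $0.1\,m^{(j-i+1)/2}$ using $m>C(LR)^2$, and then applies Weyl once to the whole product. You instead apply Weyl to each factor separately (using that every singular value of each $W_k(0)$ is exactly $\sqrt m$) and then pass to the product via submultiplicativity for $\sigma_{\max}$ and the supermultiplicative inequality $\sigma_{\min}(AB)\ge\sigma_{\min}(A)\sigma_{\min}(B)$ for $\sigma_{\min}$. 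The two conditions you end up needing, $(1+\epsilon)^L\le1.1$ and $(1-\epsilon)^L\ge0.9$ with $\epsilon=R/\sqrt m$, are essentially the same as the paper's $(1+R/\sqrt m)^L-1\le0.1$, and your reduction of $m>C(LR)^2$ to \eqref{eqn:m-bound-for-ortho} is the right computation, so the constants work out identically. What your route buys is conceptual simplicity — no binomial bookkeeping — at the cost of the one subtlety you correctly flagged: the supermultiplicative $\sigma_{\min}$ bound fails for general rectangular products, but holds here because in every $W_{j:i}$ with $(i,j)\ne(1,L)$ at most one of the non-square boundary factors appears (tall $W_1$ only as the rightmost factor, wide $W_L$ only as the leftmost), and the remaining factors are square; a short check in each of the three cases (all square, tall-on-right, wide-on-left) confirms $\sigma_{\min}(W_{j:i}(t))\ge\prod_k\sigma_{\min}(W_k(t))$. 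The paper's route avoids this case analysis entirely since Weyl applied once to the (possibly rectangular) product $W_{j:i}(t)$ sidesteps it, which is arguably the cleaner move at the boundary, but both are valid.
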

\begin{claim}\label{claim:induction-3}
	$\gA(t), \gB(t) \Longrightarrow \gA(t+1)$.
\end{claim}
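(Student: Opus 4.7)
The plan is to pass through the evolution equation \eqref{eqn:U-dynamics}, which rearranges to $\vectorize{U(t+1)-Y} = (I - \eta P(t))\vectorize{U(t)-Y} + \alpha\vectorize{E(t) X}$. I would assemble two ingredients: a spectral bound on $P(t)$ supplying the main contraction factor, and an operator-norm bound on $E(t)$ showing that the higher-order piece is negligible. A standard algebraic calculation then yields the one-step decrease $\ell(t+1) \le \left(1 - \tfrac12\eta L \sigma_{\min}^2(X)/d_y\right)\ell(t)$, and combining with $\gA(t)$ immediately gives $\gA(t+1)$.

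For the spectrum of $P(t)$, I would use $\gB(t)$ to control each of the $L$ Kronecker-product summands in \eqref{eqn:P-def}. For each $i$, $\gB(t)$ gives $W_{i-1:1}(t)^\top W_{i-1:1}(t) \succeq 0.81\,m^{i-1} I$ (with the boundary $W_{0:1}=I$ handled directly) and $W_{L:i+1}(t) W_{L:i+1}^\top(t) \succeq 0.81\,m^{L-i} I$ (with $W_{L:L+1}=I$ similarly). Together with $\alpha^2 = 1/(m^{L-1}d_y)$ and the Kronecker eigenvalue identity, the $i$-th summand is sandwiched between constant multiples of $\sigma_{\min}^2(X)/d_y\cdot I$ and $\sigma_{\max}^2(X)/d_y\cdot I$, so
\begin{equation*}
c\,L\,\sigma_{\min}^2(X)/d_y\cdot I \preceq P(t) \preceq C\,L\,\sigma_{\max}^2(X)/d_y\cdot I
\end{equation*}
with $c > 1/2$. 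The step-size choice $\eta \le d_y/(2L\|X\|^2)$ then keeps $\eta\lambda_{\max}(P(t)) \le 1$, so that $\|(I-\eta P(t))\vectorize{U(t)-Y}\|^2 \le (1 - c\eta L\sigma_{\min}^2(X)/d_y)\|\vectorize{U(t)-Y}\|^2$.

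For the higher-order error $E(t)$, I would expand $W_{L:1}(t+1) = \prod_i(W_i(t) + \Delta_i(t))$ with $\Delta_i(t) = -\eta\,\partial\ell/\partial W_i(t)$ and collect the terms of order $\eta^k$ for $k\ge 2$. The explicit formula $\partial\ell/\partial W_i(t) = \alpha\,W_{L:i+1}^\top(t)(U(t)-Y)(W_{i-1:1}(t)X)^\top$ together with $\gB(t)$ gives $\|\Delta_i(t)\| \le O(\eta\|X\|\sqrt{\ell(t)/d_y})$, while every uninterrupted block of the remaining factors contributes at most $O(m^{\text{gap}/2})$ by $\gB(t)$. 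Summing over subsets $S\subseteq[L]$ of size $k$ and using $\sum_{k\ge 2}\binom{L}{k}x^k \le 2(Lx)^2$ once $Lx$ is a small constant yields $\alpha\|E(t)X\| \le O(\sqrt{d_y}\,\ell(t)/(\sqrt{m}\,\|X\|))$. Invoking $\ell(t) \le B$ from $\gA(t)$ together with the width bound \eqref{eqn:m-bound-for-ortho} makes this at most a small multiple of $\eta L\sigma_{\min}^2(X)/d_y \cdot \sqrt{\ell(t)}$, which suffices to pass the error through the triangle inequality in $\|(I-\eta P(t))\vectorize{U(t)-Y} + \alpha\vectorize{E(t)X}\|^2$ and recover the contraction factor $1 - \tfrac12\eta L\sigma_{\min}^2(X)/d_y$.

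The main obstacle is controlling $E(t)$ without an exponential-in-$L$ penalty: its naive expansion has $2^L$ terms, and bounding each $W_i$ separately by $1.1\sqrt{m}$ would produce a factor $1.1^L$. The fix is to group factors into the maximal blocks $W_{j_{l+1}-1:j_l+1}(t)$ between successive $\Delta$'s and apply $\gB(t)$ to each block as a whole, contributing only $1.1^{k+1}$ in total; then the step-size choice and the width assumption force $L\|\Delta_i\|/\sqrt{m}$ to be a small constant, so that the geometric series collapses to its leading $k=2$ term. It is precisely here that the $L$-independent width hypothesis \eqref{eqn:m-bound-for-ortho} is exploited.
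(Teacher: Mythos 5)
Your proposal is correct and follows essentially the same route as the paper: pass to the evolution equation \eqref{eqn:U-dynamics}, sandwich the eigenvalues of $P(t)$ using $\gB(t)$ and the Kronecker eigenvalue identity to get $\lambda_{\min}(P(t)) \ge \Omega(L\sigma_{\min}^2(X)/d_y)$ and $\lambda_{\max}(P(t)) \le O(L\sigma_{\max}^2(X)/d_y)$, tame the $2^L$ terms in $E(t)$ by grouping the non-gradient factors into maximal blocks controlled by $\gB(t)$ so the binomial sum collapses to its $s=2$ term, and close with the triangle inequality. One small imprecision: your intermediate bound $\alpha\norm{E(t)X} \le O(\sqrt{d_y}\,\ell(t)/(\sqrt m\,\norm{X}))$ has already substituted $\eta$ away via the step-size cap, yet you then compare it to $\eta L\sigma_{\min}^2(X)\sqrt{\ell(t)}/d_y$, which retains an $\eta$; this only matches when $\eta$ is at its maximum. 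Since $E(t)$ carries $\eta^2$, the cleaner bookkeeping (as in the paper) is to keep one explicit factor of $\eta$ in the $E(t)$ bound, e.g. $\alpha\norm{E(t)X}_F \le O(\eta L\norm{X}\ell(t)/\sqrt{md_y})$, so that the comparison to $\tfrac16\eta\lambda_{\min}(P(t))\sqrt{2\ell(t)}$ holds for all admissible $\eta$, not just the largest one.
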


The proofs of these claims are given in Appendix~\ref{app:proof-ortho}. 
Notice that we finish the proof of \Theoremref{thm:ortho} once we prove $\gA(t)$ for all $t\ge 0$.

\section{Exponential Curse of Gaussian Initialization} \label{sec:gaussian}

In this section, we show that gradient descent with Gaussian random initialization necessarily suffers from a running time that scales exponentially with the depth of the network, unless the width becomes nearly linear in the depth.
Since we mostly focus on the dependence of width and running time on depth, we will assume the depth $L$ to be sufficiently large.

Recall that we want to minimize the objective $\ell(W_1, \ldots, W_L) = \frac{1}{2} \norm{\alpha W_{L:1} X - Y}_F^2$ by gradient descent.
We assume $Y=W^* X$ for some $W^* \in \R^{d_y\times d_x}$, so that the optimal objective value is $0$.
For convenience, we assume $\norm{X}_F = \Theta(1)$ and $\norm{Y}_F=\Theta(1)$.

Suppose that at layer $i\in[L]$, every entry of $W_i(0)$ is sampled from $\gN(0, \sigma_i^2)$, and all weights in the network are independent.
We set the scaling factor $\alpha$ such that the initial output of the network does not blow up exponentially (in expectation):
\begin{equation} \label{eqn:scaling-not-blow-up}
\expect{\norm{f(x; W_1(0), \ldots, W_L(0))}^2} \le L^{O(1)} \cdot \norm{x}^2, \quad \forall x\in \R^{d_x}.
\end{equation}
Note that $\expect{\norm{f(x; W_1(0), \ldots, W_L(0))}^2} = \alpha^2 \prod_{i=1}^L (d_i\sigma_i^2) \norm{x}^2$.
Thus \eqref{eqn:scaling-not-blow-up} means
$$
\alpha^2 \prod_{i=1}^L (d_i\sigma_i^2) \le L^{O(1)}.
$$
We also assume that the magnitude of initialization at each layer cannot vanish with depth:
\begin{equation} \label{eqn:init-not-vanish}
    d_i\sigma_i^2 \ge \frac{1}{L^{O(1)}}, \quad \forall i\in[L].
\end{equation}
Note that the assumptions \eqref{eqn:scaling-not-blow-up} and \eqref{eqn:init-not-vanish} are just sanity checks to rule out the obvious pathological cases -- they are easily satisfied by all the commonly used initialization schemes in practice.

Now we formally state our main theorem in this section.
\begin{thm} \label{thm:gaussian}
	Suppose $\max\{d_0, d_1, \ldots, d_L\} \le O(L^{1-\gamma})$ for some universal constant $0< \gamma \le 1$.
	Then there exists a universal constant $c>0$ such that, if gradient descent is run with learning rate $\eta \le e^{cL^\gamma}$, then with probability at least $0.9$ over the random initialization, for the first $e^{\Omega(L^\gamma)}$ iterations, the objective value is stuck between $0.4\norm{Y}_F^2$ and $0.6\norm{Y}_F^2$.
	% the number of iterations required to attain an objective value below $0.4$ is at least $e^{\Omega(L^\gamma)}$.
\end{thm}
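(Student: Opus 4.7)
My plan is to track the evolution of the output $U(t) = \alpha W_{L:1}(t) X$ using the same identity~\eqref{eqn:U-dynamics} and kernel $P(t)$ from~\eqref{eqn:P-def} that drove the orthogonal proof, but aimed at the opposite conclusion. The goal is to show that, under the narrow-width hypothesis $m := \max_i d_i \le O(L^{1-\gamma})$, the operator norm $\|P(t)\|$ stays exponentially small in $L^\gamma$ throughout the first $T = e^{\Omega(L^\gamma)}$ iterations, so that $U(t)$ remains essentially frozen near its (also exponentially small) initial value. This pins the loss $\ell(t) = \tfrac12\|U(t)-Y\|_F^2$ near $\tfrac12\|Y\|_F^2$.

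The first step is an initialization concentration estimate: for any fixed unit vector $v$ and contiguous range $[i,j]\subseteq[L]$, I claim $\alpha^2\|W_{j:i}(0)v\|^2 \le e^{-\Omega(L^\gamma)}$ with high probability. Writing $\log\|W_{j:i}(0)v\|^2$ as the telescoping sum $\sum_{k=i}^j\log\|W_k(0)\hat y_{k-1}\|^2$ over unit forward-propagated vectors $\hat y_{k-1}$, each increment is (conditionally on $\hat y_{k-1}$) distributed as $\log(\sigma_k^2\chi^2_{d_k})$ with mean $\log(d_k\sigma_k^2) + \psi(d_k/2) - \log(d_k/2) = \log(d_k\sigma_k^2) - \Theta(1/d_k)$ and variance $O(1/d_k)$. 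Summing over the $L$ layers and using the sanity checks $\alpha^2\prod_k d_k\sigma_k^2 \le L^{O(1)}$ and $d_k\le m\le L^{1-\gamma}$ gives $\mathbb{E}\log\bigl(\alpha^2\|W_{L:1}(0)v\|^2\bigr) \le O(\log L) - \Omega(L/m) = -\Omega(L^\gamma)$, with Bernstein concentration of width $O(L^{\gamma/2})$. Union-bounding over the columns of $X$ and over covering nets of the unit sphere (of size $e^{O(m)} = e^{O(L^{1-\gamma})}$, absorbed by the exponential tail) for the unit-vector arguments appearing in the sub-products of~\eqref{eqn:P-def} yields $\|P(0)\| \le e^{-\Omega(L^\gamma)}$ and $\|U(0)\|_F \le e^{-\Omega(L^\gamma)}$, placing $\ell(0)$ in the target interval.

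The second step is a time-induction on $t\le T$ preserving three invariants: (i) $\ell(s)\in[0.4,0.6]\|Y\|_F^2$; (ii) $\|W_i(s) - W_i(0)\|_F \le e^{-\Omega(L^\gamma)}$ for every $i$; and (iii) $\|P(s)\| \le e^{-\Omega(L^\gamma)}$, with each relevant sub-product norm within a constant factor of its $t=0$ value. Given (iii), the recursion~\eqref{eqn:U-dynamics} gives $\|U(t+1) - U(t)\|_F \le (\eta\|P(t)\| + \text{h.o.t.})\|U(t) - Y\|_F$, which is at most $e^{-\Omega(L^\gamma)}\|Y\|_F$ per step once $\eta \le e^{cL^\gamma}$ with $c$ strictly smaller than the $\Omega$-constant in (iii); summed over $T$ steps the cumulative change in $U$ is at most $0.1\|Y\|_F$, preserving (i). The standard gradient bound $\|\nabla_{W_i}\ell\|_F \le \alpha\|W_{L:i+1}(t)\|\,\|U(t) - Y\|_F\,\|X\|\,\|W_{i-1:1}(t)\| \le e^{-\Omega(L^\gamma)}$, scaled by $\eta T = e^{O(L^\gamma)}$, still leaves the weight drift exponentially small, preserving (ii). Finally, the first-order expansion $W_{L:i+1}(t) - W_{L:i+1}(0) \approx \sum_j W_{L:j+1}(t)(W_j(t) - W_j(0))W_{j-1:i+1}(0)$, combined with (ii) and the already-Lyapunov-small sub-product norms at the previous time, re-establishes (iii).

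The main obstacle is the final step: a naive product-perturbation bound $\|W_{L:i+1}(t) - W_{L:i+1}(0)\| \le L\cdot\max_k\|W_k(t) - W_k(0)\|\cdot\prod_l\|W_l\|$ suffers a prefactor $\prod_l\|W_l\| = e^{\Theta(L)}$ even with each $\|W_l\| = O(1)$, which dwarfs the exponentially small weight drift. The resolution is to use the Lyapunov-type bound on sub-products from step~1 in place of the product of individual operator norms: since $\|W_{L:j+1}(0)\| \le e^{-\Omega((L-j)/m)}$, the perturbation is controlled by $\max_k\|W_k(t) - W_k(0)\|\cdot\sum_j\|W_{L:j+1}(0)\|\,\|W_{j-1:i+1}(0)\| \le e^{-\Omega(L^\gamma)}\cdot L\cdot e^{-\Omega((L-i)/m)}$, which is a vanishing multiple of $\|W_{L:i+1}(0)\|$. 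The quantitatively delicate part is tuning the various $\Omega$-constants so that the exponential smallness of $\|P(t)\|$ strictly dominates both the learning-rate factor $e^{cL^\gamma}$ and the horizon $e^{\Omega(L^\gamma)}$, making the three-way induction close.
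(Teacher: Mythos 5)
Your high-level strategy matches the paper's: show that long subproducts of the scaled weight matrices are exponentially small in $L^\gamma$ at initialization (via concentration of products of chi-squared variables), show this is stable under exponentially small perturbations, deduce that the gradient is exponentially small, and count iterations needed to escape the "initial neighborhood." Your chi-squared estimate via $\log Z_k$ having mean $\psi(d_k/2)-\log(d_k/2)=-\Theta(1/d_k)$ is essentially equivalent to the paper's moment bound $\expect{\sqrt{Z_k}}\le 1-0.1/d_k$, and your resolution of the product-perturbation issue (replacing $\prod_l\|W_l\|$ by Lyapunov-small subproduct norms) is exactly the mechanism in the paper's Lemma~\ref{lem:perturbation-bound}.

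There is, however, a genuine gap in your $\epsilon$-net step. You bound a single fixed unit vector $v$ with failure probability $e^{-\Omega(L^\gamma)}$, and then claim a union bound over a net of the unit sphere in $\R^{d_{i-1}}$ of size $e^{O(d_{i-1})}=e^{O(L^{1-\gamma})}$, asserting this is "absorbed by the exponential tail." When $\gamma<1/2$, this is false: $L^{1-\gamma}\gg L^{\gamma}$, so $e^{O(L^{1-\gamma})}\cdot e^{-\Omega(L^\gamma)}$ diverges. One cannot rescue this by pushing the pointwise tail lower, because the drift in $\log\|W_{j:i}(0)v\|^2$ is only $-\Theta(L^\gamma)$ while its variance is also $\Theta(L^\gamma)$; Bernstein gives failure probability $e^{-c L^\gamma}$ for some constant $c$, not $e^{-\Omega(L^{1-\gamma})}$. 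The paper's Lemma~\ref{lem:init-long-product-bound} circumvents this by partitioning $[d_{i-1}]$ into $q=O(d_{i-1}/L^{\gamma/2})$ blocks of size $\le L^{\gamma/2}$, taking a $\tfrac12$-net of size $e^{O(L^{\gamma/2})}$ on each block (which \emph{is} absorbed by the pointwise tail), and then reconstructing the spectral norm on the whole sphere via the decomposition $u=\sum_l a_l u_l$ along blocks; the resulting $\sqrt{q}$ prefactor is only polynomial in $L$ and hence harmless. Without this (or a substitute idea), your argument does not cover the full range $0<\gamma\le 1$.

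A secondary, stylistic difference: you frame the argument around tracking $\|P(t)\|$ and the dynamics~\eqref{eqn:U-dynamics}, which is more machinery than the paper uses. The paper never invokes $P(t)$ in \Secref{sec:gaussian}; it bounds $\ell$ and $\nabla_{W_i}\ell$ directly from subproduct norms (Lemma~\ref{lem:loss-gradient-bound}), observing that for each $i$, at least one of $W_{L:i+1}$ or $W_{i-1:1}$ spans more than $L/4$ layers and hence is exponentially small. That is the simpler route, and worth adopting.
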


%Several comments are in sequel.
Theorem~\ref{thm:gaussian} establishes that efficient convergence from Gaussian initialization is impossible for large depth unless the width becomes nearly linear in depth.
This nearly linear dependence is the best we can hope for, since \cite{du2019width} proved a positive result when the width is larger than linear in depth.
Therefore, a phase transition from untrainable to trainable happens at the point when the width and depth has a nearly linear relation.
Furthermore, Theorem~\ref{thm:gaussian} generalizes the result of \cite{shamir2018exponential}, which only treats the special case of $d_0=\cdots=d_L=1$.

\subsection{Proof of \Theoremref{thm:gaussian}}

For convenience, we define a scaled version of $W_i$:
let $A_i = W_i / (\sqrt{d_i} \sigma_i)$ and $\beta = \alpha \prod_{i=1}^L (\sqrt{d_i} \sigma_i)$.
Then we know $\beta \le L^{O(1)}$ and $\alpha W_{L:1} = \beta A_{L:1}$, where $A_{j:i} = A_j\cdots A_i$.

We first give a simple upper bound on $\norm{A_{j:i}(0)}$ for all $1\le i \le j \le L$.
\begin{lem} \label{lem:init-product-bound}
	With probability at least $1-\delta$, we have $\norm{A_{j:i}(0)} \le O\left( \frac{L^3}{\delta} \right)$ for all $1\le i \le j \le L$.
\end{lem}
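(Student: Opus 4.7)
The plan is to reduce the spectral-norm bound to a Frobenius-norm bound controlled by Markov's inequality, then union bound over the $O(L^2)$ pairs $(i,j)$. Because each $A_k(0) = W_k(0)/(\sqrt{d_k}\sigma_k)$ has i.i.d.\ $\gN(0, 1/d_k)$ entries and is therefore an isometry in expectation, the same property will pass to arbitrary products; this is what prevents the exponential-in-$L$ blowup that a layerwise bound $\norm{A_{j:i}(0)} \le \prod_k \norm{A_k(0)}$ would give (each factor is $\Theta(1)$ with high probability, so the product of all of them is essentially useless).

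The heart of the argument is the identity $\expect{\norm{A_{j:i}(0) v}^2} = \norm{v}^2$ for every fixed $v \in \R^{d_{i-1}}$, which I would prove by induction on $j-i$. For $j=i$, each coordinate of $A_i(0)v$ is $\gN(0,\norm{v}^2/d_i)$, so the $d_i$ squared coordinates sum in expectation to $\norm{v}^2$. For the inductive step, conditioning on $A_{j-1:i}(0)$ and using independence of $A_j(0)$ from earlier layers gives $\expect{\norm{A_{j:i}(0)v}^2 \mid A_{j-1:i}(0)} = \norm{A_{j-1:i}(0)v}^2$, and the inductive hypothesis closes it. Summing this identity over the standard basis of $\R^{d_{i-1}}$ then yields $\expect{\norm{A_{j:i}(0)}_F^2} = d_{i-1}$.

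From here the rest is routine. By Markov's inequality, $\norm{A_{j:i}(0)}_F^2 \le d_{i-1} L^2/\delta$ with probability at least $1 - \delta/L^2$ for each pair; a union bound over the $\le L^2$ pairs, combined with $\norm{M} \le \norm{M}_F$ and the width hypothesis $\max_i d_i \le O(L^{1-\gamma})$ of \Theoremref{thm:gaussian}, yields $\norm{A_{j:i}(0)} \le O(L^{(3-\gamma)/2}/\sqrt{\delta})$ simultaneously for all $(i,j)$, which is absorbed into $O(L^3/\delta)$. I do not anticipate any real obstacle; the only conceptual point is that one must avoid a multiplicative per-layer spectral bound and instead work with a quantity (here $\norm{\cdot}_F^2$) whose expectation factorizes perfectly through the independent Gaussian layers.
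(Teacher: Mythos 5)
Your proof takes essentially the same route as the paper's: compute $\expect{\norm{A_{j:i}(0)}_F^2} = d_{i-1}$, apply Markov to the squared Frobenius norm, union bound over the $O(L^2)$ pairs, and dominate the spectral norm by the Frobenius norm. You fill in two details the paper elides: an explicit induction justifying the Frobenius-norm expectation, and the square root when passing from the bound on $\norm{\cdot}_F^2$ to one on $\norm{\cdot}_F$ (the paper's displayed chain of inequalities drops this square root, though the resulting $O(L^3/\delta)$ bound is weaker than the true $O(L^{(3-\gamma)/2}/\sqrt{\delta})$ and therefore still correct).
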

The proof of Lemma~\ref{lem:init-product-bound} is given in Appendix~\ref{app:proof-gaussian-init-product-bound}. It simply uses Markov inequality and union bound.

Furthermore, a key property at initialization is that if $j-i$ is large enough, $\norm{A_{j:i}(0)}$ will become exponentially small.
\begin{lem} \label{lem:init-long-product-bound}
	With probability at least $1-e^{-\Omega(L^\gamma)}$, for all $1\le i \le j \le L$ such that $j-i \ge \frac{L}{10}$, we have $\norm{A_{j:i}(0)} \le e^{-\Omega(L^\gamma)}$.
\end{lem}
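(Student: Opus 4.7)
The plan is to reduce the operator-norm bound to a per-coordinate moment calculation, exploiting the rotational invariance of each Gaussian matrix $A_l$ (whose entries are i.i.d.\ $\gN(0,1/d_l)$). For each standard basis vector $e_k\in\R^{d_{i-1}}$, set $v_l^{(k)}:=A_{l:i}e_k$, so that $\norm{A_{j:i}}_F^2=\sum_{k=1}^{d_{i-1}}\norm{v_j^{(k)}}^2$. Conditioning on $v_{l-1}^{(k)}$, rotational invariance implies $A_l v_{l-1}^{(k)}\sim\gN\!\left(0,\tfrac{\|v_{l-1}^{(k)}\|^2}{d_l}I_{d_l}\right)$, hence $\norm{v_l^{(k)}}/\norm{v_{l-1}^{(k)}}\stackrel{d}{=}\sqrt{Z_l/d_l}$ with $Z_l\sim\chi^2_{d_l}$ and independent in distribution of the past. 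Iterating the tower property yields $\expect{\norm{v_j^{(k)}}}=\prod_{l=i}^{j}\expect{\sqrt{Z_l/d_l}}$.

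A standard gamma-function estimate (using $\expect{\sqrt{Z/d}}=\sqrt{2/d}\,\Gamma((d+1)/2)/\Gamma(d/2)$, Stirling for large $d$, and direct evaluation $\sqrt{2/\pi}$ and $\sqrt{\pi}/2$ for $d=1,2$) gives a universal constant $c_0>0$ with $\expect{\sqrt{Z/d}}\le 1-c_0/d$ for every integer $d\ge 1$. Under the hypothesis $d_l\le CL^{1-\gamma}$ and $j-i+1\ge L/10$, this yields
\[
\expect{\norm{v_j^{(k)}}}\le \exp\!\Big(-c_0\sum_{l=i}^{j}\tfrac{1}{d_l}\Big)\le \exp\!\Big(-\tfrac{c_0\,L/10}{CL^{1-\gamma}}\Big)=e^{-\Omega(L^\gamma)}.
\]

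Markov's inequality then gives $\Pr[\norm{v_j^{(k)}}>e^{-c_1 L^\gamma}]\le e^{-c_2 L^\gamma}$ for sufficiently small positive constants $c_1,c_2$ (any pair with $c_1+c_2$ below the constant above). Union-bounding over the $d_{i-1}\le CL^{1-\gamma}$ basis indices $k$ and the at most $L^2$ valid pairs $(i,j)$ costs only a polynomial factor, so the total failure probability remains $e^{-\Omega(L^\gamma)}$. On the good event,
\[
\norm{A_{j:i}}^2\le \norm{A_{j:i}}_F^2 = \sum_{k}\norm{v_j^{(k)}}^2 \le d_{i-1}\cdot e^{-2c_1 L^\gamma}= e^{-\Omega(L^\gamma)},
\]
and taking square roots gives the claim.

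The main obstacle, and the reason I would avoid a naive $\varepsilon$-net argument on the unit sphere of $\R^{d_{i-1}}$, is that such a net has size $e^{\Theta(d_{i-1})}=e^{\Theta(L^{1-\gamma})}$, which for $\gamma<1/2$ overwhelms any per-direction tail of the form $e^{-\Omega(L^\gamma)}$. Passing through the Frobenius norm replaces that exponential-in-dimension cost by the polynomial factor $d_{i-1}$, which is easily absorbed into the exponent; all the probabilistic content then collapses to the single one-step inequality $\expect{\sqrt{Z/d}}\le 1-c_0/d$.
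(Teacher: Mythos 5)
Your proof is correct, and it takes a genuinely simpler route than the paper's for the ``uniform over directions'' step, while the core probabilistic content is identical. Both arguments hinge on the same one-step moment bound $\expect{\sqrt{Z/d}}\le 1-c_0/d$ for $Z\sim\chi^2_d$ (the paper uses the Gautschi-type inequality $\Gamma(a+\tfrac12)/\Gamma(a)\le\sqrt{a-0.1}$ to get $c_0=0.1$) and the same telescoping expectation $\expect{\norm{A_{j:i}(0)v}}=\prod_l\expect{\sqrt{Z_l/d_l}}\le e^{-\Omega(L^\gamma)}$ followed by Markov. Where you diverge is in upgrading the per-direction bound to an operator-norm bound. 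The paper confronts exactly the obstacle you flag---a naive $\epsilon$-net on $S^{d_{i-1}-1}$ has size $e^{\Theta(d_{i-1})}=e^{\Theta(L^{1-\gamma})}$, which kills the tail when $\gamma<1/2$---and works around it by partitioning $[d_{i-1}]$ into $q=O(d_{i-1}/L^{\gamma/2})$ blocks of size $\le L^{\gamma/2}$, taking a $\tfrac12$-net of size $e^{O(L^{\gamma/2})}$ on each block, union-bounding, and then recombining via a triangle/Cauchy--Schwarz estimate that costs only a polynomial $\sqrt{q}$ factor plus the usual $\tfrac12\norm{A_{j:i}(0)}$ self-referential term. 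You bypass all of this: bound $\norm{A_{j:i}(0)e_k}$ for the $d_{i-1}$ standard basis vectors, union-bound over them (polynomial cost), and pass through $\norm{A_{j:i}(0)}\le\norm{A_{j:i}(0)}_F=\bigl(\sum_k\norm{A_{j:i}(0)e_k}^2\bigr)^{1/2}$, paying only a $\sqrt{d_{i-1}}$ factor that is absorbed into the exponent since $d_{i-1}\le O(L^{1-\gamma})$. The Frobenius-norm detour is shorter and avoids the block-net bookkeeping entirely; it is the cleaner argument here, precisely because the final bound only needs to be $e^{-\Omega(L^\gamma)}$ and polynomial slack in $L$ is free.
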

\begin{proof}
	We first consider a fixed pair $(i, j)$ such that $j-i \ge \frac{L}{10}$.
	In order to bound $\norm{A_{j:i}(0)}$, we first take an arbitrary unit vector $v \in \R^{d_{i-1}}$ and bound $\norm{A_{j:i}(0) v}$.
	We can write $\norm{A_{j:i}(0) v}^2 = \prod_{k=i}^j Z_k$, where $Z_k = \frac{\norm{A_{k:i}(0) v}^2}{\norm{A_{k-1:i}(0)v}^2}$.
	Note that for any nonzero $v' \in \R^{d_{k-1}}$ independent of $A_k(0)$, the distribution of $ d_k \cdot \frac{\norm{A_k(0) v'}^2}{\norm{v'}^2} $ is $\chi^2_{d_k}$.
	Therefore, $Z_i, \ldots, Z_j$ are independent, and $d_k Z_k \sim \chi^2_{d_k}$ ($ k =i, i+1, \ldots, j$).
	Recall the expression for the moments of chi-squared random variables: $\expect{Z_k^\lambda} = \frac{2^\lambda \Gamma(d_k/2 + \lambda)}{d_k^\lambda \Gamma(d_k/2)}$ ($\forall \lambda>0$).
	Taking $\lambda=\frac12$ and using the bound $\frac{\Gamma(a+\frac12)}{\Gamma(a)} \le \sqrt{a-0.1}\, (\forall a \ge \frac12) $ \citep{qi2012bounds}, we get $\expect{\sqrt{Z_k}} \le \sqrt{\frac{2(d_k/2-0.1)}{d_k}} = \sqrt{1 - \frac{0.2}{d_k}} \le 1 - \frac{0.1}{d_k}$. 
	Therefore we have
	\begin{align*}
	\expect{\sqrt{\prod\nolimits_{k=i}^j Z_k}} \le \prod\nolimits_{k=i}^j \left( 1 - \frac{0.1}{d_k} \right) \le \left( 1 - \frac{0.1}{O(L^{1-\gamma})} \right)^{j-i+1}
	\le \left( 1 - \Omega(L^{\gamma-1}) \right)^{\frac{L}{10}} = e^{-\Omega(L^\gamma)}.
	\end{align*}
	Choose a sufficiently small constant $c'>0$.
	By Markov inequality we have $\Pr\left[ \sqrt{\prod\nolimits_{k=i}^j Z_k} > e^{-c'L^\gamma} \right] \le e^{c'L^\gamma} \expect{\sqrt{\prod\nolimits_{k=i}^j Z_k}} \le e^{c'L^\gamma} e^{-\Omega(L^\gamma)} = e^{-\Omega(L^\gamma)}$.
	Therefore we have shown that for any fixed unit vector $v \in \R^{d_{i-1}}$, with probability at least $1- e^{-\Omega(L^\gamma)}$ we have $\norm{A_{j:i}(0)v} \le e^{-\Omega(L^\gamma)}$.
	
	Next, we use this to bound $\norm{A_{j:i}(0)}$ via an $\epsilon$-net argument.
	We partition the index set $[d_{i-1}]$ into $[d_{i-1}] = S_1 \cup S_2 \cup \cdots \cup S_q$ such that $|S_l| \le L^{\gamma/2} \, (\forall l\in[q])$ and $q = O(\frac{d_{i-1}}{L^{\gamma/2}})$.
	For each $l\in[q]$, let $\gN_l$ be a $\frac{1}{2}$-net for all the unit vectors in $\R^{d_{i-1}}$ with support in $S_l$. Note that we can choose $\gN_l$ such that $|\gN_l| = e^{O(|S_l|)} = e^{O(L^{\gamma/2})}$.
	Taking a union bound over $ \cup_{l=1}^q \gN_l $, we know that $\norm{A_{j:i}(0)v} \le e^{-\Omega(L^\gamma)} \norm{v}$ simultaneously for all $v \in \cup_{l=1}^q \gN_l $ with probability at least $1 - \left( \sum_{l=1}^q |\gN_l| \right) e^{-\Omega(L^\gamma)} \ge 1 - q \cdot e^{O(L^{\gamma/2})} e^{-\Omega(L^\gamma)} = 1 - e^{-\Omega(L^\gamma)}$.
	
	Now, for any $u\in \R^{d_{i-1}}$, we write it as $u = \sum_{l=1}^q a_l u_l$ where $a_l $ is a scalar and $u_l$ is a unit vector supported on $S_l$.
	By the definition of $\frac{1}{2}$-net, for each $l\in[q]$ there exists $v_l \in \gN_l$ such that $\norm{v_l-u_l}\le \frac{1}{2}$.
	We know that $\norm{A_{j:i}(0)v_l} \le e^{-\Omega(L^\gamma)} \norm{v_l}$ for all $l\in[q]$.
	Let $v = \sum_{l=1}^q a_l v_l$. We have
	\begin{align*}
	\norm{A_{j:i}(0)v} &\le \sum\nolimits_{l=1}^q |a_l| \cdot \norm{A_{j:i}(0)v_l} \le \sum\nolimits_{l=1}^q |a_l| \cdot e^{-\Omega(L^\gamma)} \norm{v_l}
	\le e^{-\Omega(L^\gamma)} \sqrt{q \cdot \sum\nolimits_{l=1}^q a_l^2 \norm{v_l}^2} \\
	&= \sqrt{q} e^{-\Omega(L^\gamma)} \norm{v} = e^{-\Omega(L^\gamma)} \norm{v}.
	\end{align*}
	Note that $\norm{u-v} = \norm{\sum_{l=1}^q a_l (u_l-v_l)} = \sqrt{\sum_{l=1}^q a_l^2 \norm{u_l-v_l}^2} \le \sqrt{\frac14 \sum_{l=1}^q a_l^2 } = \frac12 \norm{u}$, which implies $\norm{v} \le \frac32\norm{u}$. Therefore we have
	\begin{align*}
	\norm{A_{j:i}(0)u} &\le \norm{A_{j:i}(0)v} + \norm{A_{j:i}(0)(u-v)} 
	\le e^{-\Omega(L^\gamma)} \norm{v} + \norm{A_{j:i}(0)} \cdot \norm{u-v} \\
	&\le e^{-\Omega(L^\gamma)}\cdot \frac32 \norm{u} + \norm{A_{j:i}(0)} \cdot \frac12 \norm{u}
	= e^{-\Omega(L^\gamma)} \norm{u} + \norm{A_{j:i}(0)} \cdot \frac12 \norm{u}.
	\end{align*}
	The above inequality is valid for any $u\in \R^{d_{i-1}}$. Thus we can take the unit vector $u$ that maximizes $\norm{A_{j:i}(0)u} $. This gives us
	$\norm{A_{j:i}(0)} \le  e^{-\Omega(L^\gamma)} + \frac12 \norm{A_{j:i}(0)}$, which implies $\norm{A_{j:i}(0)} \le e^{-\Omega(L^\gamma)} $.
	
	Finally, we take a union bound over all possible $(i, j)$. The failure probaility is at most $L^2 e^{-\Omega(L^\gamma)} = e^{-\Omega(L^\gamma)}$.
\end{proof}

The following lemma shows that the properties in Lemmas~\ref{lem:init-product-bound} and~\ref{lem:init-long-product-bound} are still to some extent preserved after applying small perturbations on all the weight matrices.
\begin{lem} \label{lem:perturbation-bound}
	Suppose that the initial weights satisfy $\norm{A_{j:i}(0)} \le O(L^3)$ for all $1\le i \le j \le L$, and $\norm{A_{j:i}(0)} \le e^{-c_1 L^\gamma}$ if $j-i\ge \frac{L}{10}$, where $c_1>0$ is a universal constant.
	Then for another set of matrices $A_1, \ldots, A_L$ satisfying $\norm{A_i-A_i(0)} \le e^{-0.6c_1 L^\gamma}$ for all $i\in[L]$, we must have
	\begin{equation} \label{eqn:product-bound}
	\begin{aligned}
	&\norm{A_{j:i}} \le O(L^3), \quad \forall 1\le i \le j \le L, \\
	&\norm{A_{j:i}} \le O\left( e^{- c_1 L^\gamma}\right), \quad \forall 1\le i \le j \le L, j-i\ge \frac{L}{4}. 
	\end{aligned}
	\end{equation}
\end{lem}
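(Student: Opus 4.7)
The plan is to expand the perturbed product multiplicatively and bound each term in the expansion. Write $\Delta_k = A_k - A_k(0)$, so $\|\Delta_k\| \le e^{-0.6 c_1 L^\gamma}$. Then
\begin{equation*}
A_{j:i} = \prod_{k=j}^{i} \bigl(A_k(0) + \Delta_k\bigr) = \sum_{S \subseteq \{i,\ldots,j\}} T_S,
\end{equation*}
where for $S = \{k_1 < \cdots < k_s\}$ the term $T_S$ is obtained by replacing $A_k(0)$ with $\Delta_k$ at each $k \in S$, namely
\begin{equation*}
T_S = A_{j:k_s+1}(0)\, \Delta_{k_s}\, A_{k_s-1:k_{s-1}+1}(0)\, \Delta_{k_{s-1}} \cdots \Delta_{k_1}\, A_{k_1-1:i}(0).
\end{equation*}
So $T_S$ is a product of $s+1$ \emph{initial blocks} (some possibly empty, i.e.\ equal to $I$) interleaved with $s$ perturbations.

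For the first claim $\|A_{j:i}\| \le O(L^3)$, I would simply bound every initial block by the given $O(L^3)$ and each $\Delta_k$ by $e^{-0.6 c_1 L^\gamma}$, giving $\|T_S\| \le O(L^3)^{s+1} e^{-0.6 s c_1 L^\gamma}$ for $|S|=s$. Summing over all subsets (there are at most $\binom{L}{s}$ of size $s$, crudely $\le L^s$),
\begin{equation*}
\|A_{j:i}\| \le O(L^3) \sum_{s=0}^{L} \bigl(O(L^4)\, e^{-0.6 c_1 L^\gamma}\bigr)^s.
\end{equation*}
Since $\gamma>0$, for $L$ large enough the ratio $O(L^4)e^{-0.6 c_1 L^\gamma} \le 1/2$, so the geometric series is bounded by $2$ and the first bound follows.

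For the refined bound when $j-i\ge L/4$, I would split the sum by $s=|S|$ and exploit the exponential smallness of long initial blocks. The term $S=\emptyset$ contributes $\|A_{j:i}(0)\| \le e^{-c_1 L^\gamma}$ since $j-i\ge L/10$. For $s=1$ with $S=\{k\}$, the two surrounding blocks $A_{j:k+1}(0)$ and $A_{k-1:i}(0)$ have lengths summing to $j-i \ge L/4$, so at least one has length $\ge L/8 \ge L/10$; that block contributes an additional factor $e^{-c_1 L^\gamma}$, and the other factor is at most $O(L^3)$, giving $\|T_{\{k\}}\| \le O(L^3)\, e^{-1.6 c_1 L^\gamma}$, and summing over the $\le L$ choices of $k$ yields $O(L^4)\, e^{-1.6 c_1 L^\gamma}$. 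For $s\ge 2$, use only the crude bounds on blocks: the contribution for fixed $s$ is $\le L^s \cdot O(L^3)^{s+1} e^{-0.6 s c_1 L^\gamma}$, and summing the resulting geometric series in $s\ge 2$ is dominated by the $s=2$ term $O(L^{11})\, e^{-1.2 c_1 L^\gamma}$. Adding all contributions and factoring out $e^{-c_1 L^\gamma}$ yields $\|A_{j:i}\| \le e^{-c_1 L^\gamma}\bigl(1 + O(L^4)e^{-0.6 c_1 L^\gamma} + O(L^{11}) e^{-0.2 c_1 L^\gamma}\bigr) = O(e^{-c_1 L^\gamma})$ for $L$ large.

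The only place where something could go wrong is the competition between the combinatorial factor $\binom{L}{s}$ times $O(L^3)^{s+1}$ and the exponential factor $e^{-0.6 s c_1 L^\gamma}$ coming from the perturbations. The key observation, which makes the whole argument work, is that $L^\gamma \gg \log L$ when $L$ is large and $\gamma>0$, so the exponential decay kills any polynomial-in-$L$ factor of the form $L^{O(s)}$; this is the implicit reason the lemma is stated in the large-$L$ regime of Section~\ref{sec:gaussian}. No subtler cancellations between terms of the expansion are needed -- a triangle-inequality bound on $\|T_S\|$ is enough, since the exponential gap between $e^{-0.6 c_1 L^\gamma}$ (one perturbation) and the polynomial block bounds provides ample slack.
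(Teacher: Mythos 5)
Your proof is correct and follows essentially the same route as the paper's: expand the product over subsets of perturbed positions, bound each term via the triangle inequality using the polynomial bound $O(L^3)$ on initial blocks and $e^{-0.6c_1L^\gamma}$ on perturbations, sum the geometric series, and for $j-i\ge L/4$ single out the $s=0$ and $s=1$ terms, observing that in each $s=1$ term at least one of the two flanking initial blocks is long enough to be exponentially small. The only cosmetic difference is that the paper bounds $\norm{A_{j:i}-A_{j:i}(0)}$ and then adds back $\norm{A_{j:i}(0)}$, whereas you keep the $S=\emptyset$ term in the sum directly; this is immaterial.
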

\begin{proof}
	It suffices to show that the difference $A_{j:i} - A_{j:i}(0)$ is tiny.
	Let $\Delta_i = A_i - A_i(0)$.
	We have $A_{j:i} = (A_j(0) + \Delta_j) \cdots (A_{i+1}(0) + \Delta_{i+1}) (A_i(0) + \Delta_i)$.
	Expanding this product, except for the one term corresponding to $A_{j:i}(0)$, every other term has the form $A_{j:(k_s+1)}(0)	\cdot \Delta_{k_s} \cdot A_{(k_s-1):(k_{s-1}+1)}(0) \cdot \Delta_{k_{s-1}} \cdots  \Delta_{k_1} \cdot A_{(k_1-1):i}(0) $, where $i \le k_1 < \cdots < k_s \le j$. By assumption, each $\Delta_k$ has spectral norm $e^{-0.6c_1 L^\gamma}$, and each $A_{j':i'}(0)$ has spectral norm $O(L^3)$, so we have $\norm{A_{j:(k_s+1)}(0)	\cdot \Delta_{k_s} \cdot A_{(k_s-1):(k_{s-1}+1)}(0) \cdot \Delta_{k_{s-1}} \cdots  \Delta_{k_1} \cdot A_{(k_1-1):i}(0) } \le \left(e^{-0.6c_1 L^\gamma}\right)^s \left( O(L^3) \right)^{s+1}$.
	Therefore we have
	\begin{align*}
	&\norm{A_{j:i} - A_{j:i}(0)} \le \sum_{s=1}^{j-i+1} \binom{j-i+1}{s} \left(e^{-0.6c_1 L^\gamma}\right)^s \left( O(L^3) \right)^{s+1} \\
	\le\,& \sum_{s=1}^{j-i+1} L^s \left(e^{-0.6c_1 L^\gamma}\right)^s \left( O(L^3) \right)^{s+1} \le O(L^3) \sum_{s=1}^\infty \left( O(L^4) e^{-0.6c_1 L^\gamma} \right)^s 
	\le O(L^3) \sum_{s=1}^\infty (1/2)^s = O(L^3),
	\end{align*}
	which implies $\norm{A_{j:i}} \le O(L^3)$ for all $1\le i \le j \le L$.
	
	The proof of the second part of the lemma %($j-i\ge \frac L4$), we need to bound the terms of the form $A_{j:k+1}(0) \cdot \Delta_k \cdot A_{k-1:i}(0)$ more carefully. We postpone its proof to
	is postponed to Appendix~\ref{app:proof-gaussian-perturbation-bound}.
\end{proof}

As a consequence of \Lemref{lem:perturbation-bound}, we can control the objective value and the gradient at any point sufficiently close to the random initialization.

\begin{lem} \label{lem:loss-gradient-bound}
	For a set of weight matrices $W_1, \ldots, W_L$ with $A_i = W_i/(\sqrt{d_i}\sigma_i)$ that satisfy~\eqref{eqn:product-bound}, the objective and the gradient satisfy
	\begin{align*}
	& 0.4\norm{Y}_F^2 < \ell(W_1, \ldots, W_L) < 0.6\norm{Y}_F^2,\\
	 & \norm{\nabla_{W_i}\ell(W_1, \ldots, W_L)} \le (\sqrt{d_i}\sigma_i )^{-1} e^{-0.9c_1L^\gamma}, \quad \forall i\in[L].
	\end{align*}
\end{lem}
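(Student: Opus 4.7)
The plan is to exploit the rescaled representation $\alpha W_{L:1}=\beta A_{L:1}$ with $\beta\le L^{O(1)}$, together with the hypothesis \eqref{eqn:product-bound}, to reduce both bounds to a single observation: long products of $A_k$'s are exponentially damped. For the objective, first write $\ell=\tfrac12\|\beta A_{L:1}X-Y\|_F^2$. Since $L-1\ge L/4$ for $L$ large, \eqref{eqn:product-bound} gives $\|A_{L:1}\|\le O(e^{-c_1 L^\gamma})$, hence $\|\beta A_{L:1}X\|_F\le L^{O(1)}\cdot O(e^{-c_1 L^\gamma})\cdot\|X\|_F = e^{-\Omega(L^\gamma)}$. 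Expand
\[
\ell = \tfrac12\|Y\|_F^2 - \langle \beta A_{L:1}X,\,Y\rangle + \tfrac12\|\beta A_{L:1}X\|_F^2,
\]
and bound the last two terms by $e^{-\Omega(L^\gamma)}$ via Cauchy--Schwarz and $\|Y\|_F=\Theta(1)$. For $L$ sufficiently large this perturbation is $\ll \tfrac{1}{10}\|Y\|_F^2$, placing $\ell$ in $(0.4\|Y\|_F^2,\,0.6\|Y\|_F^2)$.

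For the gradient, I will compute
\[
\nabla_{W_i}\ell = \alpha\,W_{L:i+1}^\top(\alpha W_{L:1}X-Y)X^\top W_{i-1:1}^\top,
\]
and translate to $A$-notation using $W_{L:i+1}=\bigl(\prod_{k>i}\!\sqrt{d_k}\sigma_k\bigr)A_{L:i+1}$, $W_{i-1:1}=\bigl(\prod_{k<i}\!\sqrt{d_k}\sigma_k\bigr)A_{i-1:1}$, and the identity $\alpha\prod_{k\ne i}\sqrt{d_k}\sigma_k=\beta/(\sqrt{d_i}\sigma_i)$. This yields
\[
\nabla_{W_i}\ell = \frac{\beta}{\sqrt{d_i}\sigma_i}\,A_{L:i+1}^\top(\beta A_{L:1}X-Y)X^\top A_{i-1:1}^\top.
\]
Applying $\|UMV\|_F\le\|U\|\|M\|_F\|V\|$ and bounding $\|\beta A_{L:1}X-Y\|_F=\sqrt{2\ell}=O(1)$ from Step~1 (with $\|X\|_F=\Theta(1)$), I obtain
\[
\|\nabla_{W_i}\ell\| \le \|\nabla_{W_i}\ell\|_F \le \frac{L^{O(1)}}{\sqrt{d_i}\sigma_i}\cdot\|A_{L:i+1}\|\cdot\|A_{i-1:1}\|.
\]

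The crux is the elementary pigeonhole observation that $(L-i)+(i-1)=L-1$, so at least one of $L-i$ and $i-1$ exceeds $L/4$ (taking boundary conventions $A_{0:1}=A_{L:L+1}=I$ when $i=1$ or $i=L$, since then the other index automatically exceeds $L/4$). Consequently \eqref{eqn:product-bound} forces one of $\|A_{L:i+1}\|$, $\|A_{i-1:1}\|$ to be at most $O(e^{-c_1 L^\gamma})$ while the other is at most $O(L^3)$. Multiplying and absorbing the $L^{O(1)}$ prefactor into the exponent (using $L^{O(1)}\le e^{0.1 c_1 L^\gamma}$ for $L$ large), I conclude
\[
\|\nabla_{W_i}\ell\| \le \frac{1}{\sqrt{d_i}\sigma_i}\,e^{-0.9c_1 L^\gamma},
\]
as required. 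There is no substantial obstacle here; the argument is mechanical once the pigeonhole partition is identified, and the delicate point is only to choose the slack constants (e.g.\ $0.9$ vs.\ $c_1$) so that the polynomial overhead $L^{O(1)}$ and the $O(1)$ bound on $\|\beta A_{L:1}X-Y\|_F$ are both comfortably absorbed for sufficiently large $L$.
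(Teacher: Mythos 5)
Your proposal is correct and follows essentially the same route as the paper's proof: rescale to the $A$-notation, show the network output $\beta A_{L:1}X$ is exponentially small to pin $\ell$ near $\tfrac12\|Y\|_F^2$, and then bound $\nabla_{W_i}\ell$ by the product $\|A_{L:i+1}\|\cdot\|A_{i-1:1}\|$, using the pigeonhole observation that one of the two "arms" must contain a run of length at least $L/4$ so that \eqref{eqn:product-bound} applies to it. The only cosmetic difference is that you expand $\ell$ as $\tfrac12\|Y\|_F^2-\langle \beta A_{L:1}X,Y\rangle+\tfrac12\|\beta A_{L:1}X\|_F^2$ explicitly, whereas the paper just asserts closeness to $\tfrac12\|Y\|_F^2$; both are the same argument.
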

The proof of Lemma~\ref{lem:loss-gradient-bound} is given in Appendix~\ref{app:proof-gaussian-loss-gradient-bound}.

Finally, we can finish the proof of \Theoremref{thm:gaussian} using the above lemmas.

\begin{proof}[Proof of \Theoremref{thm:gaussian}]
	From Lemmas~\ref{lem:init-product-bound} and \ref{lem:init-long-product-bound}, we know that with probability at least $0.9$, we have (i) $\norm{A_{j:i}(0)} \le O(L^3)$ for all $1\le i \le j \le L$, and (ii) $\norm{A_{j:i}(0)} \le e^{-c_1 L^\gamma}$ if $(i, j)$ further satisfies $j-i\ge \frac{L}{10}$.
	Here $c_1>0$ is a universal constant.
	From now on we are conditioned on these properties being satisfied.
	We suppose that the learning rate $\eta$ is at most $e^{0.2c_1L^\gamma}$.
	
	We say that a set of weight matrices $W_1, \ldots, W_L$ are in the ``initial neighborhood'' if $\norm{A_i - A_i(0)} \le e^{-0.6c_1L^\gamma}$ for all $i\in[L]$.
	From Lemmas~\ref{lem:perturbation-bound} and \ref{lem:loss-gradient-bound} we know that in the ``initial neighborhood'' the objective value is always between $0.4\norm{Y}_F^2$ and $0.6\norm{Y}_F^2$.
	Therefore we have to escape the ``initial neighborhood'' in order to get the objective value out of this interval.
	
	Now we calculate how many iterations are necessary to escape the ``initial neighborhood.'' 
	According to \Lemref{lem:loss-gradient-bound}, inside the ``initial neighborhood'' each $W_i$ can move at most $\eta (\sqrt{d_i}\sigma_i )^{-1} e^{-0.9c_1L^\gamma}$ in one iteration by definition of the gradient descent algorithm.
	In order to leave the ``initial neighborhood,'' some $W_i$ must satisfy $\norm{W_i - W_i(0)} = \sqrt{d_i}\sigma_i \norm{A_i - A_i(0)} > \sqrt{d_i}\sigma_i e^{-0.6c_1L^\gamma}$.
	In order to move this amount, the number of iterations has to be at least
	\begin{equation*}
	\frac{\sqrt{d_i}\sigma_i e^{-0.6c_1L^\gamma}}{ \eta (\sqrt{d_i}\sigma_i )^{-1} e^{-0.9c_1L^\gamma} }
	= \frac{d_i\sigma_i^2 e^{0.3c_1L^\gamma}}{ \eta }
	\ge \frac{1}{L^{O(1)}} \cdot \frac{e^{0.3c_1L^\gamma}}{e^{0.2c_1L^\gamma}}
	\ge e^{\Omega(L^\gamma)}. %\qedhere
	\end{equation*}
	This finishes the proof.
\end{proof}

\section{Experiments} \label{sec:experiment}
In this section, we provide empirical evidence to support the results in Sections \ref{sec:ortho} and \ref{sec:gaussian}.
To study how depth and width affect convergence speed of gradient descent under orthogonal and Gaussian initialization schemes, we train a family of linear networks with their widths ranging from 10 to 1000 and depths from 1 to 700, on a fixed synthetic dataset $(X, Y)$.\footnote{We choose $X \in \R^{1024\times16}$ and $W^* \in \R^{10\times 1024}$, and set $Y = W^*X$. Entries in $X$ and $W^*$ are drawn i.i.d. from $\gN(0, 1)$.}
Each network is trained using gradient descent staring from both Gaussian and orthogonal initializations. 
In Figure~\ref{fig:heat-maps}, We lay out the logarithm of the relative training loss $\frac{\ell(t)}{\ell(0)}$, using heap-maps, at steps $t=1258$ and $t=10000$. In each heat-map, each point represents the relative training loss of one experiment; the darker the color, the smaller the loss.
Figure~\ref{fig:heat-maps} clearly demonstrates a sharp transition from untrainable to trainable (i.e., from red to black) when we increase the width of the network:
\begin{itemize}
    \item for Gaussian initialization, this transition occurs across a contour characterized by a linear relation between width and depth;
    \item for orthogonal initialization, the transition occurs at a width that is approximately independent of the depth.
\end{itemize}
These observations excellently verify our theory developed in Sections~\ref{sec:ortho} and \ref{sec:gaussian}.

\begin{figure}[t]
\begin{center}
         \begin{subfigure}[b]{0.4\textwidth}
                \centering
\includegraphics[height=4.5cm]{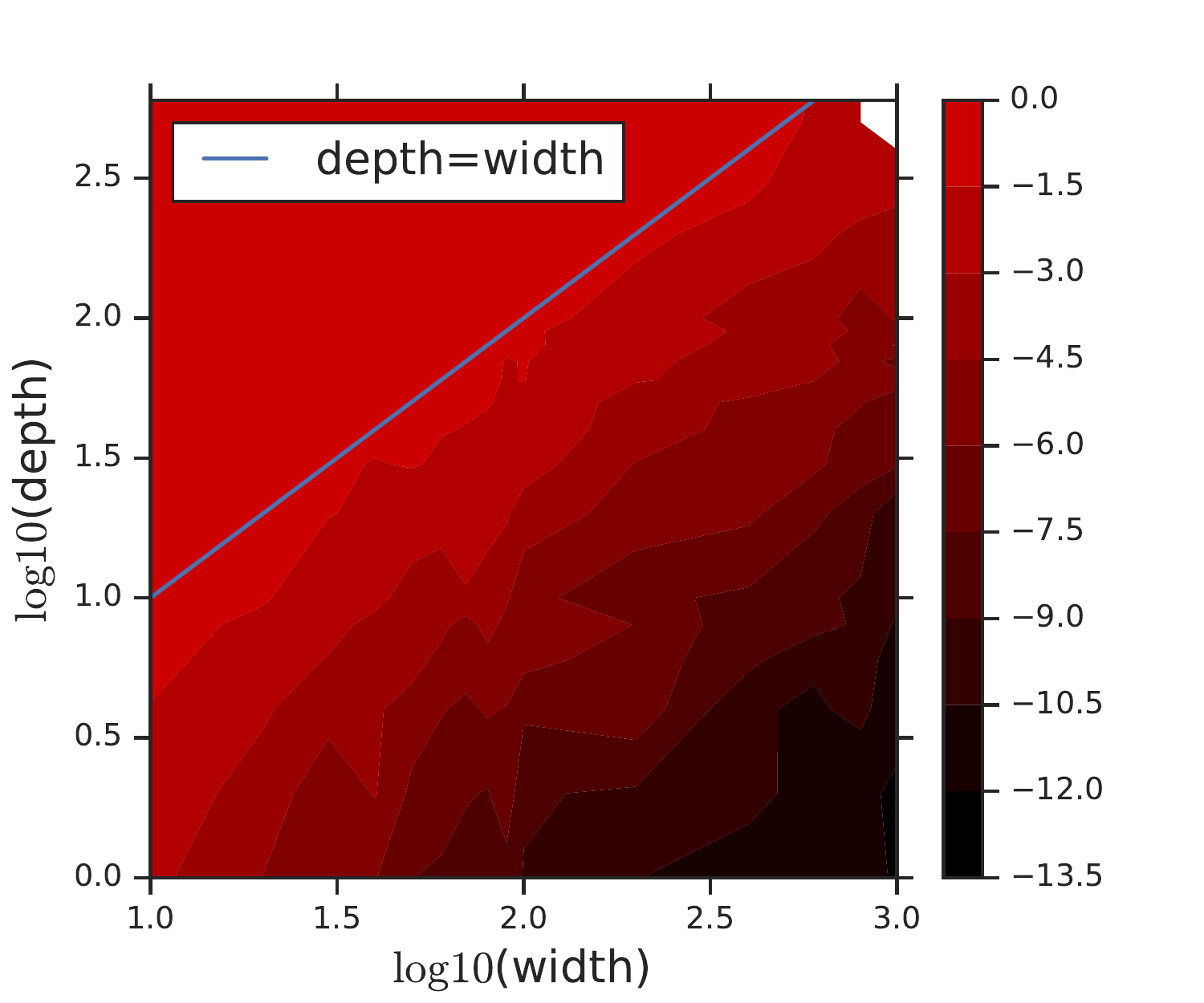}
                \caption{Gaussian, steps=1258}
                \label{fig:tanhphasediag}
        \end{subfigure}%
        \begin{subfigure}[b]{0.4\textwidth}
                \centering
               \includegraphics[height=4.5cm]{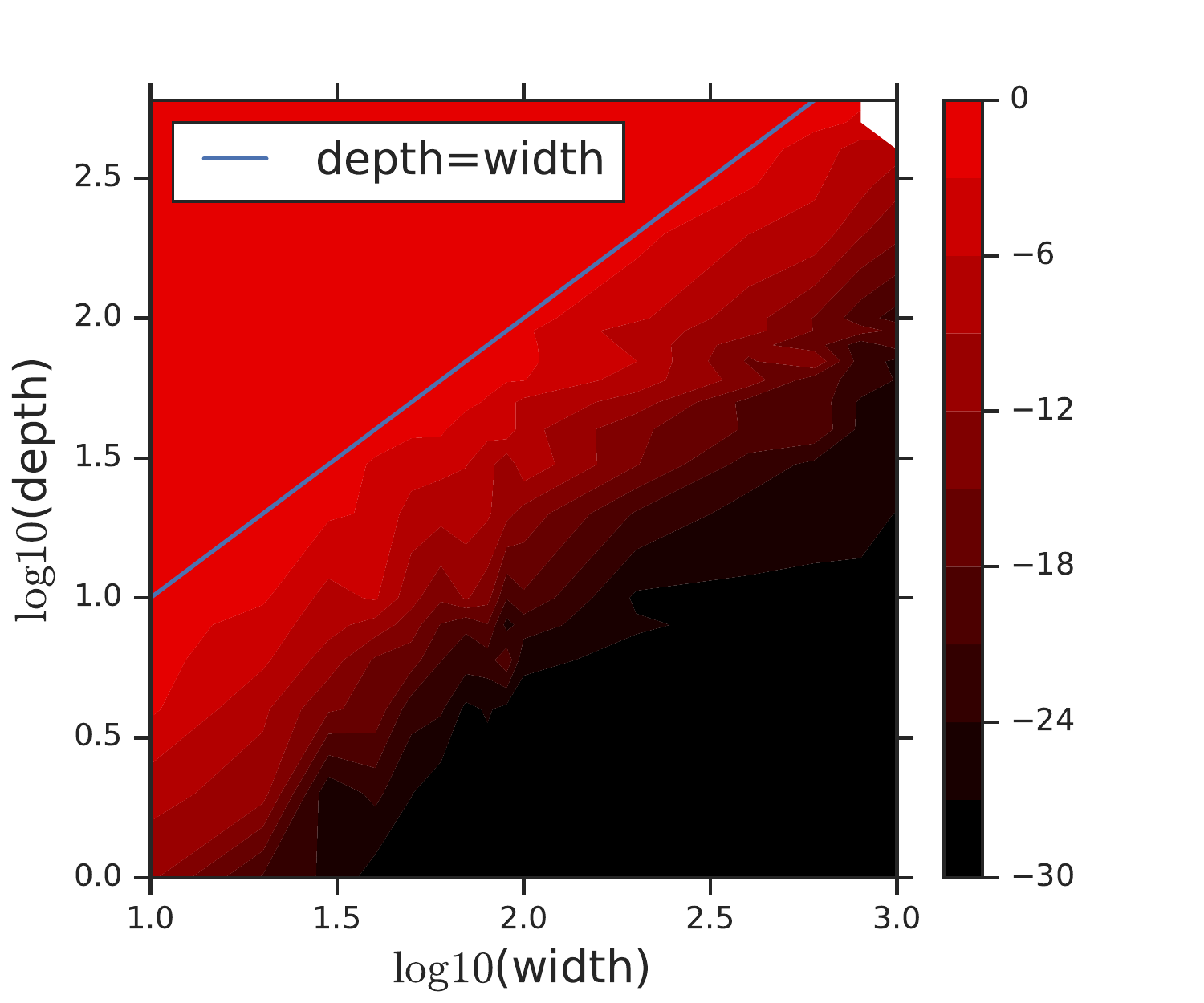}
                \caption{Gaussian, steps=10000}
                \label{fig:tanhphasediag}
        \end{subfigure}%
        \\
 \begin{subfigure}[b]{0.4\textwidth}
                \centering
     \includegraphics[height=4.5cm]{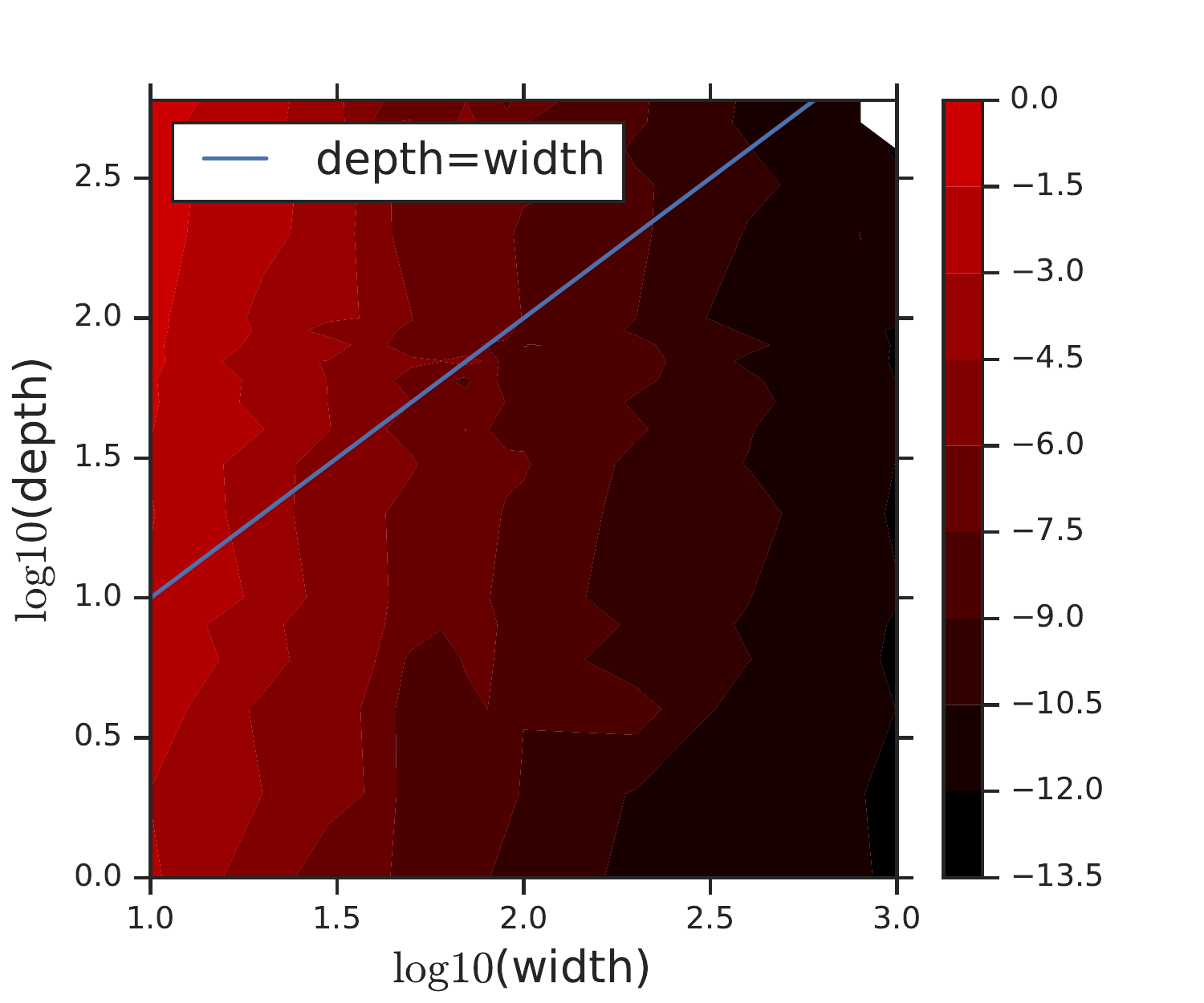}
                \caption{Orthogonal, steps=1258}
                \label{fig:tanhphasediag}
        \end{subfigure}%
        \begin{subfigure}[b]{0.4\textwidth}
                \centering
                \includegraphics[height=4.5cm]{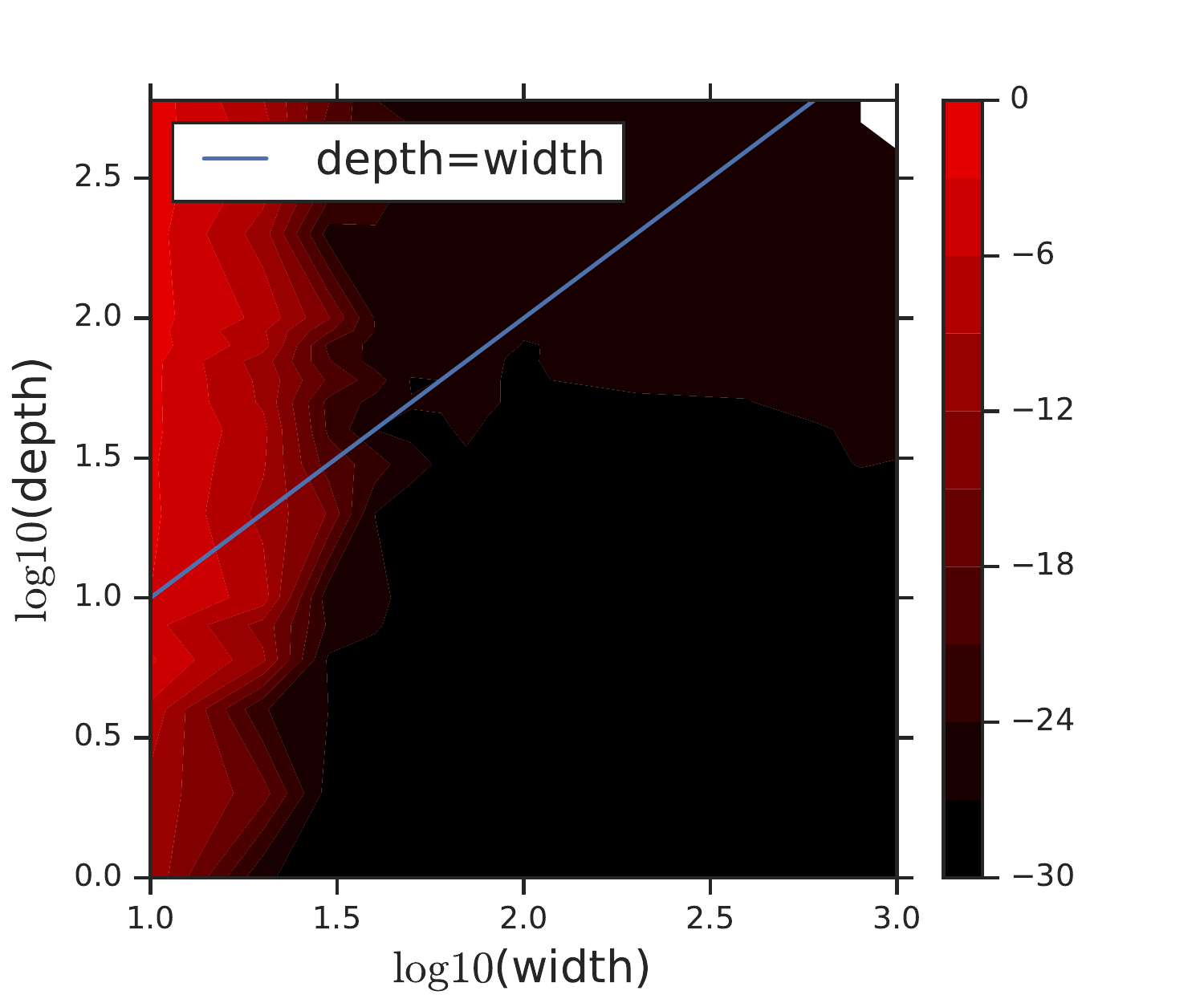}
                \caption{Orthogonal, steps=10000}
                \label{fig:tanhphasediag}
        \end{subfigure}%
\end{center}
\caption{$\log \frac{\ell(t)}{\ell(0)}$ at $t=1258$ and $t=10000$, for different depth-width configurations and different initialization schemes. Darker color means smaller loss.}
\label{fig:heat-maps}
\end{figure}

To have a closer look into the training dynamics, we also plot ``relative loss v.s. training time'' for a variety of depth-width configurations. See Figure~\ref{fig:loss}. There again we can clearly see that orthogonal initialization enables fast training at small width (independent of depth), and that the required width for Gaussian initialization depends on depth.

\begin{figure}[t]
\begin{center}
         \begin{subfigure}[b]{0.33\textwidth}
                \centering
\includegraphics[height=3.5cm]{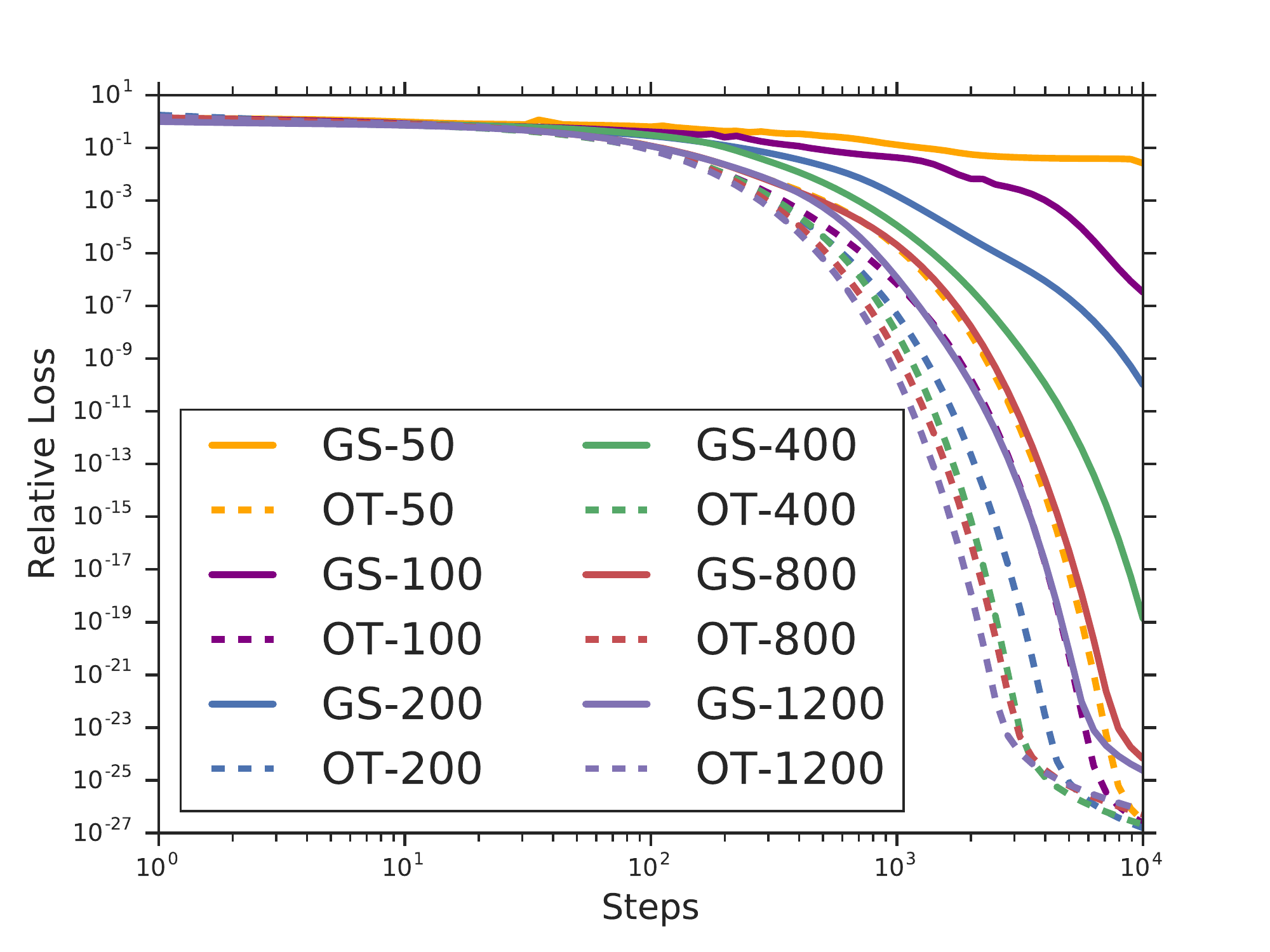}
                \caption{Depth=50}
                \label{fig:tanhphasediag}
        \end{subfigure}%
        \begin{subfigure}[b]{0.33\textwidth}
                \centering
               \includegraphics[height=3.5cm]{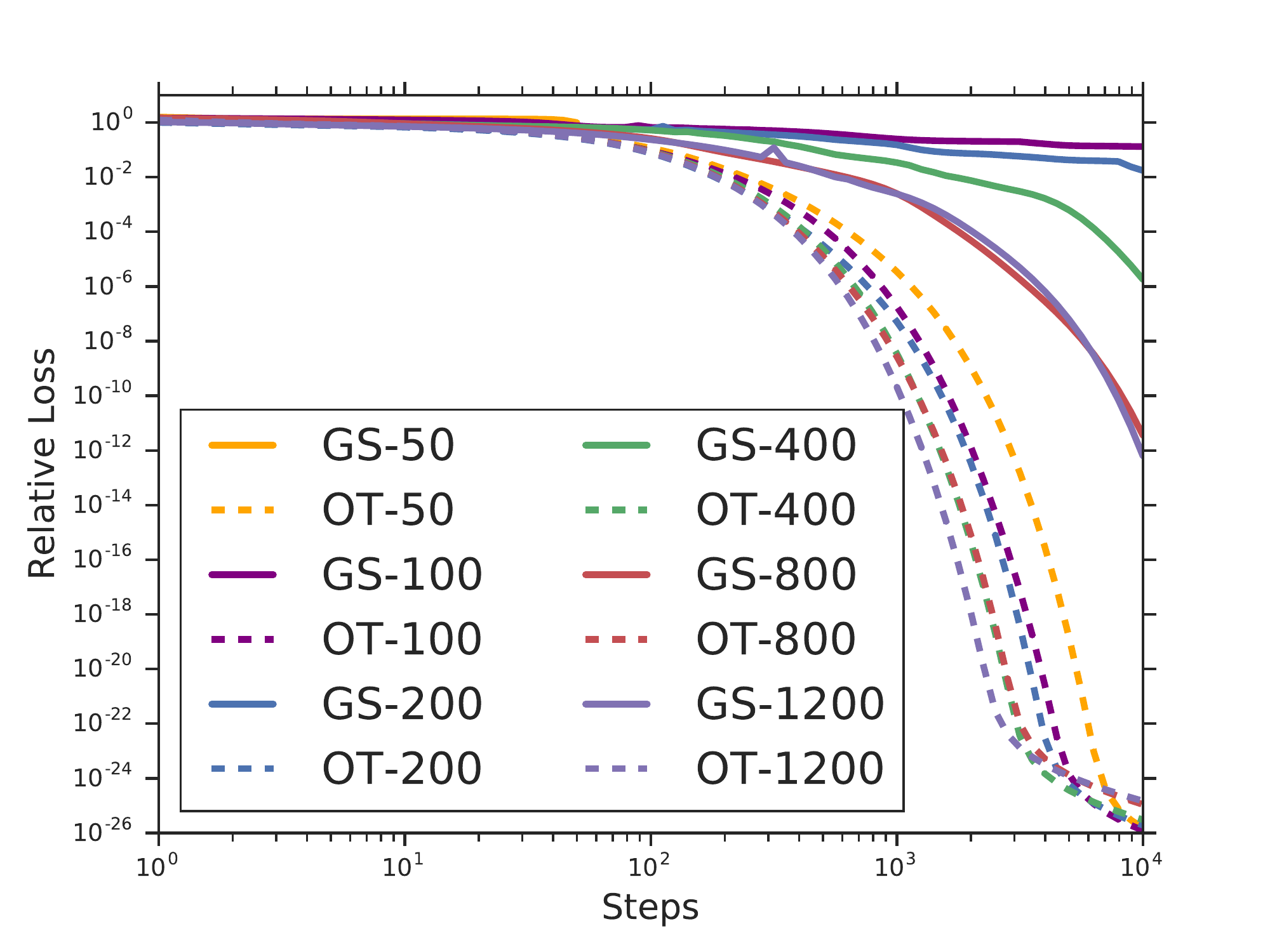}
                \caption{Depth=200}
                \label{fig:tanhphasediag}
        \end{subfigure}%
        \begin{subfigure}[b]{0.33
        \textwidth}
                \centering
                \includegraphics[height=3.5cm]{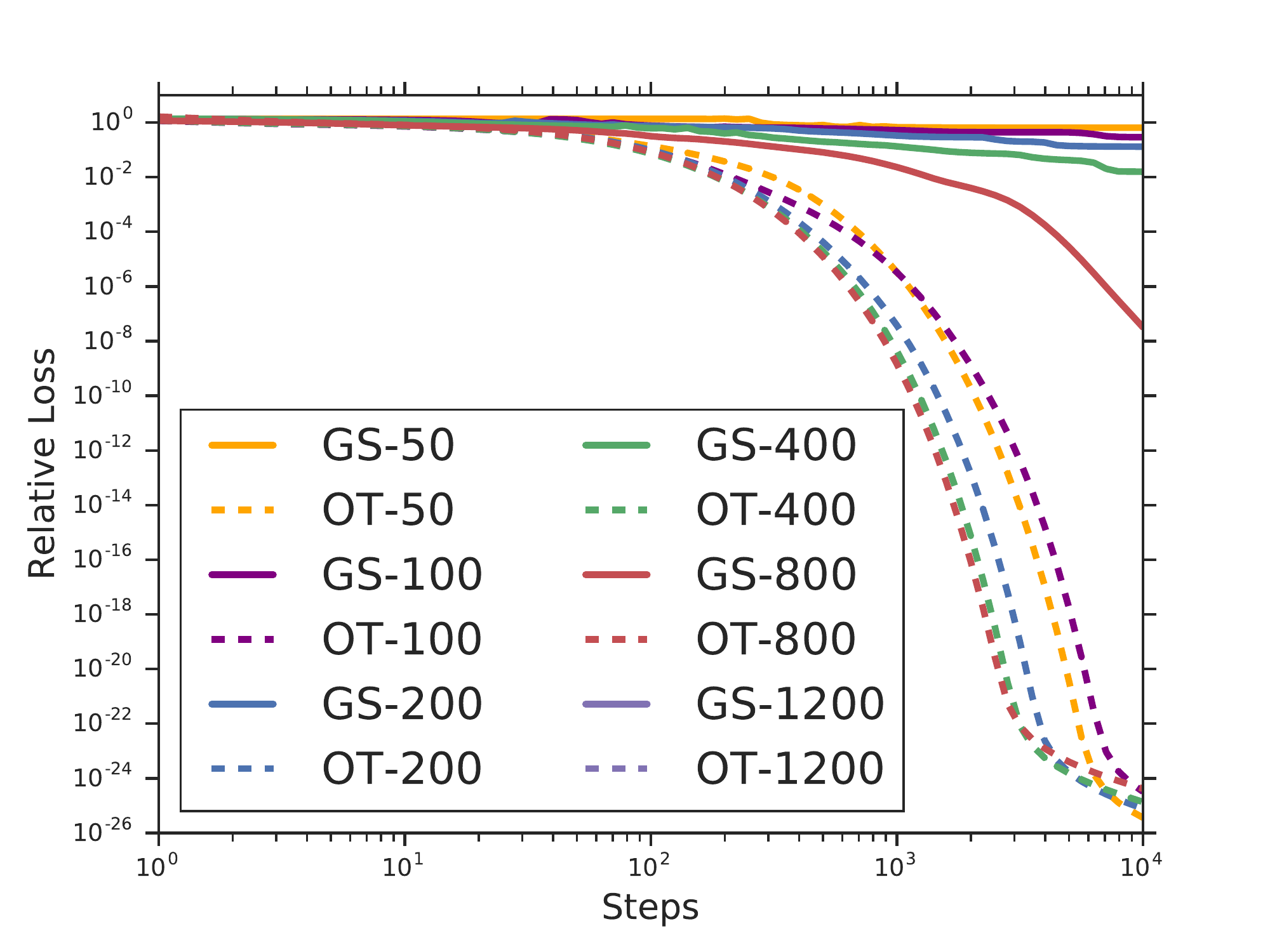}
                \caption{Depth=400}
                \label{fig:tanhphasediag}
        \end{subfigure}%
\end{center}
\caption{Relative loss v.s. training time. For each plot, we vary width from 50 (yellow) to 1200 (purple). Solid and dashed lines represent Gaussian (GS) and orthogonal (OT) initializations.} 
\label{fig:loss}
\end{figure}

\section{Conclusion} \label{sec:conclu}
In this work, we studied the effect of the initialization parameter values of deep linear neural networks on the convergence time of gradient descent. We found that when the initial weights are iid Gaussian, the convergence time grows exponentially in the depth unless the width is at least as large as the depth. In contrast, when the initial weight matrices are drawn from the orthogonal group, the width needed to guarantee efficient convergence is in fact independent of the depth. These results establish for the first time a concrete proof that orthogonal initialization is superior to Gaussian initialization in terms of convergence time.

\bibliography{ref}
\bibliographystyle{iclr2020_conference}

\newpage
\appendix

\section{Proofs for \Secref{sec:ortho}}  \label{app:proof-ortho}

\subsection{Proof of Lemma~\ref{lem:ortho-init}} \label{app:proof-ortho-init}

\begin{proof}[Proof of Lemma~\ref{lem:ortho-init}]
We only need to prove~\eqref{eqn:init-loss}. We first upper bound the magnitude of the network's initial output on any given input $x\in\R^{d_x}$.
	Let $z = \frac{1}{\sqrt{m^{L-1}}} W_{L-1:1}(0) \cdot x \in \R^m$. Then we have $\norm{z}=\norm{x}$, and $f(x; W_1(0), \ldots, W_L(0)) = \frac{1}{\sqrt{d_y}} W_L(0) \cdot z = \sqrt{\frac{m}{d_y}} \cdot \frac{1}{\sqrt{m}} W_L(0) \cdot z $.
	Note that $\frac{1}{\sqrt{m}} W_L(0) \cdot z $ is the (signed) projection of $z$ onto a random subspace in $\R^m$ of dimension $d_y$. Therefore $\norm{\frac{1}{\sqrt{m}} W_L(0) \cdot z}^2 / \norm{z}^2$ has the same distribution as $\frac{g_1^2 + \cdots + g_{d_y}^2}{g_1^2 + \cdots + g_m^2}$, where $g_1, \ldots, g_m $ are i.i.d. samples from $\gN(0, 1)$.
	By the standard tail bounds for chi-squared distributions we have
	\begin{align*}
	&\Pr\left[ g_1^2 + \cdots + g_{d_y}^2 \le d_y + 2\sqrt{d_y \log(1/\delta')} + 2\log(1/\delta') \right] \ge 1-\delta', \\
	&\Pr\left[ g_1^2 + \cdots + g_{m}^2 \ge m - 2\sqrt{m \log(1/\delta') } \right] \ge 1-\delta'.
	\end{align*}
	Let $\delta' = \frac{\delta}{2r}$.
	Note that $m > C\cdot \log(r/\delta)$. We know that with probability at least $1-\frac{\delta}{r}$ we have
	\begin{align*}
	\norm{\frac{1}{\sqrt{m}} W_L(0) \cdot z}^2 / \norm{z}^2
	\le \frac{d_y + 2\sqrt{d_y \log(2r/\delta)} + 2\log(2r/\delta)}{m - 2\sqrt{m \log(2r/\delta) }}
	= \frac{O(d_y + \log(r/\delta))}{\Omega(m)},
	\end{align*}
	which implies
	\begin{equation} \label{eqn:ortho-inproof-1}
	\begin{aligned}
	\norm{ f(x; W_1(0), \ldots, W_L(0)) }^2 
	&= \frac{m}{d_y} \norm{\frac{1}{\sqrt{m}} W_L(0) \cdot z}^2
	= \frac{m}{d_y} \cdot O\left( \frac{d_y + \log(r/\delta)}{m} \right) \norm{z}^2 \\
	&= O\left( 1 + \frac{\log(r/\delta)}{d_y} \right) \norm{x}^2.
	\end{aligned}
	\end{equation}
	Finally, taking a union bound, we know that with probability at least $1-\delta$, the inequality \eqref{eqn:ortho-inproof-1} holds for every $x\in\{x_1, \ldots, x_r\}$, which implies
	\begin{align*}
	\ell(0) &= \frac12 \sum_{k=1}^r \norm{ f(x_k; W_1(0), \ldots, W_L(0)) - y_k }^2
	\le \sum_{k=1}^r \left( \norm{ f(x_k; W_1(0), \ldots, W_L(0))}^2 + \norm{ y_k }^2 \right) \\
	&\le O\left( 1 + \frac{\log(r/\delta)}{d_y} \right) \sum_{k=1}^r\norm{x_k}^2 + \sum_{k=1}^r \norm{ y_k }^2 
	=  O\left( 1 + \frac{\log(r/\delta)}{d_y} \right) \norm{X}_F^2 +  \norm{ Y }_F^2 \\
	& \le O\left( 1 + \frac{\log(r/\delta)}{d_y} + \norm{W^*}^2 \right) \norm{X}_F^2. \qedhere
	\end{align*}
\end{proof}

\subsection{Proof of Claim~\ref{claim:induction-1}}
\label{app:proof-induction-1}

\begin{proof}[Proof of Claim~\ref{claim:induction-1}]
Let $\gamma = \frac12 L \sigma_{\min}^2(X)/d_y$.
From $\gA(0), \ldots, \gA(t)$ we have $\ell(s) \le (1-\eta\gamma)^sB$ for all $0\le s \le t$.
The gradient of the objective function~\eqref{eqn:loss-func} is
$\frac{\partial\ell}{\partial W_i} = \alpha W_{L:i+1}^\top (U - Y) \left( W_{i-1:1} X \right)^\top$.
Thus we can bound the gradient norm as follows for all $0\le s\le t$ and all $i\in[L]$: 
\begin{equation} \label{eqn:gradient-norm-bound}
\begin{aligned}
&\norm{\frac{\partial\ell}{\partial W_i}(s)}_F
\le \alpha  \norm{W_{L:i+1}(s)} \norm{U(s) - Y}_F \norm{W_{i-1:1}(s)} \norm{X}\\
\le\,& \frac{1}{\sqrt{m^{L-1}d_y}} \cdot 1.1 m^{\frac{L-i}{2}}  \cdot \sqrt{2\ell(s)} \cdot 1.1 m^{\frac{i-1}{2}} \norm{X}
\le \frac{2\sqrt{(1-\eta\gamma)^sB}}{\sqrt{d_y}}   \norm{X} , %= \Delta(t).
\end{aligned}
\end{equation} 
where we have used $\gB(s)$.
Then for all $i\in[L]$ we have:
\begin{align*}
&\norm{W_i(t+1) - W_i(0)}_F
\le \sum_{s=0}^{t} \norm{W_i(s+1) - W_i(s)}_F
=\,  \sum_{s=0}^{t} \norm{\eta \frac{\partial\ell}{\partial W_i}(s)}_F\\
\le\,& \eta \sum_{s=0}^{t} \frac{2\sqrt{(1-\eta\gamma)^sB}}{\sqrt{d_y}}   \norm{X} 
\le \frac{2\eta \sqrt{B}}{\sqrt{d_y}} \norm{X} \sum_{s=0}^{t-1}(1-\eta\gamma/2)^s    
\le \frac{2 \eta \sqrt{B}}{ \sqrt{d_y}} \norm{X} \cdot \frac{2}{\eta\gamma}     \\
=\,& \frac{8 \sqrt{B d_y} \norm{X}}{ L \sigma_{\min}^2(X)} .
\end{align*} 
This proves $\gC(t+1)$.
\end{proof}

\subsection{Proof of Claim~\ref{claim:induction-2}}
\label{app:proof-induction-2}

\begin{proof}[Proof of Claim~\ref{claim:induction-2}]
Let $R = \frac{8\sqrt{Bd_y}\norm{X}}{ L \sigma_{\min}^2(X)}$ and $\Delta_i = W_i(t) - W_i(0)$ $(i\in[L])$.
Then $\gC(t)$ means $\norm{\Delta_i}_F \le R\, (\forall i\in[L])$.

For $1\le i \le j \le L$, we have
\begin{align*}
W_{j:i}(t) = \left( W_j(0) + \Delta_j \right) \cdots \left( W_i(0) + \Delta_i \right) .
\end{align*}
Expanding this product,
each term except $W_{j:i}(0)$ has the form:
\begin{equation} \label{eqn:perturbation-terms}
\begin{aligned}
W_{j:(k_s+1)}(0)	\cdot \Delta_{k_s} \cdot W_{(k_s-1):(k_{s-1}+1)}(0) \cdot \Delta_{k_{s-1}} \cdots  \Delta_{k_1} \cdot W_{(k_1-1):i}(0)  ,
\end{aligned}
\end{equation}
where $i \le k_1 < \cdots < k_s \le j$ are locations where terms like $\Delta_{k_l}$ are taken out.
Note that every factor in~\eqref{eqn:perturbation-terms} of the form $W_{j':i'}(0)$ satisfies $\norm{W_{j':i'}(0)} = m^{\frac{j'-i'+1}{2}}$ according to~\eqref{eqn:init-spec}.
Thus, we can bound the sum of all terms of the form~\eqref{eqn:perturbation-terms} as
\begin{align*} % \label{eqn:perturbation-bound-middle}
&\norm{W_{j:i}(t) - W_{j:i}(0)} 
\le \sum_{s=1}^{j-i+1} \binom{j-i+1}{s} R^s  m^{\frac{j-i+1-s}{2}} 
= (\sqrt m + R)^{j-i+1} - (\sqrt m)^{j-i+1} \\
=\,& (\sqrt m)^{j-i+1} \left( \left( 1 + {R}/{\sqrt m} \right)^{j-i+1} - 1 \right)
\le (\sqrt m)^{j-i+1} \left( \left( 1 + {R}/{\sqrt m} \right)^{L} - 1 \right)
\le 0.1 (\sqrt m)^{j-i+1}.
\end{align*} 
Here the last step uses $m > C (LR)^2$ which is implied by~\eqref{eqn:m-bound-for-ortho}.
Combined with~\eqref{eqn:init-spec}, this proves $\gB(t)$.
\end{proof}

\subsection{Proof of Claim~\ref{claim:induction-3}}
\label{app:proof-induction-3}

\begin{proof}[Proof of Claim~\ref{claim:induction-3}]
Recall that we have the dynamics~\eqref{eqn:U-dynamics} for $U(t)$.
In order to establish convergence from~\eqref{eqn:U-dynamics} we need to prove upper and lower bounds on the eigenvalues of $P(t)$, as well as show that the high-order term $E(t)$ is small.
We will prove these using $\gB(t)$.

%From the definition of $P(t)$ (Equation~\ref{eqn:P-def}), we bound the maximum and minimum eigenvalues of $P(t)$ as follows:

Using the definition~\eqref{eqn:P-def} and property $\gB(t)$, we have
%\begin{equation*}   %\label{eqn:P-max-eigenval}
 \begin{align*}
\lambda_{\max}(P(t)) 
&\le \alpha^2  \sum_{i=1}^L  \lambda_{\max}\left( \left( W_{i-1:1}(t) X \right)^\top \left( W_{i-1:1}(t)X \right) \right)   \cdot   \lambda_{\max}\left( W_{L:i+1}(t)  W_{L:i+1}^\top(t) \right)  \\
%= \,& \frac{1}{m^{L-1}d_y} \sum_{i=1}^L  \sigma_{\max}^2 \left( W_{i-1:1}(t)X \right) \cdot   \sigma_{\max}^2\left( W_{L:i+1}(t)  \right) \\
&\le \frac{1}{m^{L-1}d_y} \sum_{i=1}^L \left( 1.1 m^{\frac{i-1}{2}} \sigma_{\max}(X) \right)^2 \left( 1.1 m^{\frac{L-i}{2}} \right)^2 
\le 2 L \sigma_{\max}^2(X) / d_y, \\
%\end{align*}
%\end{equation*}
%\begin{equation*}  %\label{eqn:P-min-eigenval}
%\begin{align*}
\lambda_{\min}(P(t)) 
&\ge \alpha^2 \sum_{i=1}^L  \lambda_{\min}\left( \left( W_{i-1:1}(t) X \right)^\top \left( W_{i-1:1}(t)X \right) \right)  \cdot   \lambda_{\min}\left( W_{L:i+1}(t)  W_{L:i+1}^\top(t) \right)  \\
%= \,& \frac{1}{m^{L-1}d_y} \sum_{i=1}^L  \sigma_{\min}^2 \left( W_{i-1:1}(t)X \right) \cdot   \sigma_{\min}^2\left( W_{L:i+1}(t)  \right) \\
&\ge \frac{1}{m^{L-1}d_y} \sum_{i=1}^L \left( 0.9 m^{\frac{i-1}{2}} \sigma_{\min}(X) \right)^2 \left( 0.9 m^{\frac{L-i}{2}} \right)^2 
\ge \frac{3}{5} L \sigma_{\min}^2(X) / d_y.
\end{align*} 
In the lower bound above, we make use of the following relation on dimensions: $m\ge d_x \ge r$, which enables the inequality $\lambda_{\min}\left( \left( W_{i-1:1}(t) X \right)^\top \left( W_{i-1:1}(t)X \right) \right) = \sigma_{\min}^2\left( W_{i-1:1}(t) X \right) \ge \sigma_{\min}^2\left( W_{i-1:1}(t) \right) \cdot \sigma_{\min}^2(X)$.

%\end{equation*}
%Here we have used the property that for symmetric matrices $A$ and $B$, every eigenvalue of $A \otimes B$ is the product of an eigenvalue of $A$ and an eigenvalue of $B$.

Next, we will prove the following bound on the high-order term $E(t)$:
\[
\frac{1}{\sqrt{m^{L-1}d_y}} \norm{E(t) X}_F \le  \frac16 \eta \lambda_{\min}(P_t) \norm{U(t)-Y}_F .
\]
%Next we bound $E(t)$.
Recall that $E(t)$ is the sum of all high-order terms in the product
\begin{align*}
W_{L:1}(t+1) = \prod_i \left( W_i(t) - \eta \frac{\partial \ell}{\partial W_i}(t)  \right).
\end{align*}
Same as~\eqref{eqn:gradient-norm-bound}, we have $\norm{\frac{\partial \ell}{\partial W_i} (t)}_F \le \frac{2 \sqrt{\ell(t)}\norm{X}}{\sqrt{d_y}}$ ($\forall i\in[L]$).
Then we have
\begin{align*}
& \frac{1}{\sqrt{m^{L-1}d_y}} \norm{E(t) X}_F \\
\le\,& \frac{1}{\sqrt{m^{L-1}d_y}} \sum_{s=2}^L \binom{L}{s} \left(\eta \cdot \frac{2 \sqrt{\ell(t)}\norm{X}}{\sqrt{d_y}} \right)^s    m^{\frac{L-s}{2}} \norm{X} \\
\le\,& \sqrt{\frac{m}{d_y}} \norm{X} \sum_{s=2}^L L^s \left(\eta \cdot \frac{2 \sqrt{\ell(t)}\norm{X}}{\sqrt{d_y}} \right)^s    m^{-\frac{s}{2}}  \\
=\,& \sqrt{\frac{m}{d_y}} \norm{X} \sum_{s=2}^L  \left( \frac{2\eta L \sqrt{\ell(t)}\norm{X}}{\sqrt{m d_y}} \right)^s    
%= \,& \frac{\eta \sqrt{\ell(t)}\sqrt{L}   \norm{X}^2  }{d_y} \sum_{s=2}^L  \left(  O\left( \frac{\sqrt{L\cdot \ell(t)}}{\sqrt{m}\norm{X}}  \right)  \right)^{s-1}  
\end{align*}
From $\eta \le \frac{d_y}{ 2L \norm{X}^2 }$, we have $\frac{2\eta L \sqrt{\ell(t)}\norm{X}}{\sqrt{m d_y}}  \le  \frac{  \sqrt{d_y \cdot\ell(t)}}{  \sqrt{m}\norm{X} } $.
Note that $m > C \cdot\frac{ d_y B}{\norm{X}^2} \ge C \cdot\frac{ d_y \ell(t)}{\norm{X}^2}$. Thus we have
\begin{align*}
& \frac{1}{\sqrt{m^{L-1}d_y}} \norm{E(t) X}_F 
\le \sqrt{\frac{m}{d_y}} \norm{X}   \left( \frac{2\eta L \sqrt{\ell(t)}\norm{X}}{\sqrt{m d_y}} \right)^2 \sum_{s=2}^{L-2} 0.5^{s-2} \\
\le\,& 2 \sqrt{\frac{m}{d_y}} \norm{X}   \left( \frac{2\eta L \sqrt{\ell(t)}\norm{X}}{\sqrt{m d_y}} \right)^2
\le 2 \sqrt{\frac{m}{d_y}} \norm{X}  \cdot  \frac{2\eta L \sqrt{\ell(t)}\norm{X}}{\sqrt{m d_y}}  \cdot \frac{  \sqrt{d_y \cdot\ell(t)}}{  \sqrt{m}\norm{X} } \\
=\,& \frac{4\eta L \norm{X} \cdot \ell(t)}{\sqrt{md_y}}.
\end{align*}
It suffices to show that the above bound is at most $\frac16 \eta \lambda_{\min}(P_t) \norm{U(t)-Y}_F = \frac16 \eta \lambda_{\min}(P_t) \sqrt{2\ell(t)} $.
Since $\lambda_{\min}(P_t) \ge \frac{3}{5} L \sigma_{\min}^2(X) / d_y$, it suffices to have
\begin{align*}
\frac{4\eta L \norm{X} \cdot \ell(t)}{\sqrt{md_y}} \le \frac16 \eta \cdot \frac{3 L \sigma_{\min}^2(X) \sqrt{2\ell(t)} }{ 5 d_y},
\end{align*}
which is true since $m> C \cdot \frac{d_y B \norm{X}^2}{\sigma_{\min}^4(X)} \ge C \cdot \frac{d_y \ell(t)\norm{X}^2}{\sigma_{\min}^4(X)}$.

%Therefore we have proved $\frac{1}{\sqrt{m^{L-1}d_y}} \norm{E(t) X}_F <  \frac16 \eta \lambda_{\min}(P_t) \norm{U(t)-Y}_F $.
Finally, from~\eqref{eqn:U-dynamics} and $\eta \le \frac{d_y}{2L \norm{X}^2} \le \frac{1}{\lambda_{\max}(P_t)}$ we have
\begin{align*}
&\norm{U(t+1)-Y}_F= \norm{\vectorize{U(t+1) - Y} }\\
=\,& \norm{ (I-\eta P(t)) \cdot \vectorize{U(t)-Y} + \frac{1}{\sqrt{m^{L-1}d_y}} \vectorize{E(t)X}} \\
\le\,& (1 - \eta \lambda_{\min}(P(t)) ) \norm{\vectorize{U(t)-Y}} + \frac{1}{\sqrt{m^{L-1}d_y}} \norm{E(t) X}_F \\
\le\,& (1 - \eta \lambda_{\min}(P(t)) ) \norm{U(t)-Y}_F + \frac16 \eta \lambda_{\min}(P_t) \norm{U(t)-Y}_F \\
=\,& \left(1 - \frac56 \eta \lambda_{\min}(P(t)) \right) \norm{U(t)-Y}_F  \\
\le\,& \left(1 - \frac12 \eta L \sigma_{\min}^2(X)/d_y \right) \norm{U(t)-Y}_F .
\end{align*}
Therefore $\ell(t+1) \le \left(1 - \frac12 \eta L \sigma_{\min}^2(X)/d_y \right)^2 \ell(t) \le \left(1 - \frac12 \eta L \sigma_{\min}^2(X)/d_y \right)\ell(t)$.
Combined with $\gA(t)$, this proves $\gA(t+1)$.
\end{proof}
%\vspace{-10pt}

\section{Proofs for \Secref{sec:gaussian}}  \label{app:proof-gaussian}

\subsection{Proof of Lemma~\ref{lem:init-product-bound}} \label{app:proof-gaussian-init-product-bound}
\begin{proof}[Proof of Lemma~\ref{lem:init-product-bound}]
	Notice that for any $1\le i \le j \le L$ we have $\expect{\norm{A_{j:i}(0)}_F^2} = d_{i-1}$.
	Then by Markov inequality we have $\Pr\left[ \norm{A_{j:i}(0)}_F^2 \ge \frac{d_{i-1}}{\delta/L^2} \right] \le \delta/L^2$. 
	Taking a union bound, we know that with probability at least $1-\delta$, for all $1\le i \le j \le L$ simultaneously we have $\norm{A_{j:i}(0)} \le \norm{A_{j:i}(0)}_F \le \frac{d_{i-1}}{\delta/L^2} \le O(L^3/\delta)$ (note that $d_{i-1} \le O(L^{1-\gamma}) =O(L)$).
\end{proof}

\subsection{Proof of Lemma~\ref{lem:perturbation-bound}} \label{app:proof-gaussian-perturbation-bound}
\begin{proof}[Proof of Lemma~\ref{lem:perturbation-bound} (continued)]
	For the second part of the lemma ($j-i\ge \frac L4$), we need to bound the terms of the form $A_{j:k+1}(0) \cdot \Delta_k \cdot A_{k-1:i}(0)$ more carefully. In fact, if $j-i\ge \frac L4$, then $\max\{ j-k-1, k-1-i \} \ge \frac{L}{10}$, which by assumption means either $A_{j:k+1}(0)$ or $A_{k-1:i}(0)$ has spectral norm bounded by $e^{-c_1 L^\gamma}$.
	This implies $\norm{A_{j:k+1}(0) \cdot \Delta_k \cdot A_{k-1:i}(0)} \le e^{-c_1 L^\gamma} e^{-0.6c_1 L^\gamma} \cdot O(L^3) = e^{-1.6c_1 L^\gamma} \cdot O(L^3)$.
	Therefore we have
	\begin{align*}
	&\norm{A_{j:i} - A_{j:i}(0)} \le (j-i+1) e^{-1.6c_1L^\gamma} \cdot O(L^3) + \sum_{s=2}^{j-i+1} \binom{j-i+1}{s} \left(e^{-0.6c_1 L^\gamma}\right)^s \left( O(L^3) \right)^{s+1}  \\
	\le\,& e^{- c_1 L^\gamma} +  \sum_{s=2}^{\infty} L^s  \left(e^{-0.6c_1 L^\gamma}\right)^s \left( O(L^3) \right)^{s+1}
	\le e^{- c_1 L^\gamma} +  \sum_{s=2}^\infty \left( e^{-0.5c_1L^\gamma} \right)^s
	= O\left( e^{- c_1 L^\gamma} \right).
	\end{align*}
	This implies $\norm{A_{j:i}} \le  O\left( e^{- c_1 L^\gamma} \right) $. 
\end{proof}

\subsection{Proof of Lemma~\ref{lem:loss-gradient-bound}} \label{app:proof-gaussian-loss-gradient-bound}

\begin{proof}[Proof of Lemma~\ref{lem:loss-gradient-bound}]
	We can bound the network's output as $$\norm{\alpha W_{L:1}(0)X}_F = \norm{\beta A_{L:1}(0)X}_F \le L^{O(1)} \cdot e^{-\Omega(L^\gamma)} \norm{X}_F = e^{-\Omega(L^\gamma)} .$$
	Thus the objective value $\ell(W_1, \ldots, W_L) = \frac12 \norm{\alpha W_{L:1}(0)X  -Y}_F^2 $ must be extremely close to $\frac{1}{2}\norm{Y}_F^2$ for large $L$, so $0.4 \norm{Y}_F^2 < \ell(W_1, \ldots, W_L) < 0.6 \norm{Y}_F^2$.
	
	As for the gradient, for any $i\in[L]$ we have
	\begin{align*}
	&\norm{\nabla_{W_i} \ell(W_1, \ldots, W_L)} 
	= \norm{ \alpha W_{L:i+1}^\top \left(\alpha W_{L:1}X  -Y\right) X^\top W_{i-1:1}^\top}\\
	=\,& \norm{ \beta / (\sqrt{d_i}\sigma_i) \cdot A_{L:i+1}^\top \left(\alpha W_{L:1}X  -Y\right) X^\top A_{i-1:1}^\top } 
	\le \frac{L^{O(1)}}{\sqrt{d_i}\sigma_i} \norm{A_{L:i+1}} \cdot O(1) \cdot \norm{A_{i-1:1}} .
	\end{align*}
	Using~\eqref{eqn:product-bound}, and noting that either $L-i-1$ or $i-1$ is greater than $\frac L4$, we have
	\begin{equation*}
	\norm{\nabla_{W_i} \ell(W_1, \ldots, W_L)} \le \sigma_i^{-1} L^{O(1)} \cdot O\left( e^{-c_1L^\gamma} \right) \cdot O(L^3)
	\le (\sqrt{d_i}\sigma_i )^{-1} e^{-0.9c_1L^\gamma} . \qedhere
	\end{equation*}
\end{proof}

\end{document}